\documentclass[10pt,twocolumn,letterpaper]{article}

\usepackage{cvpr}              
\definecolor{cvprblue}{rgb}{0.21,0.49,0.74}
\usepackage[pagebackref,breaklinks,colorlinks,allcolors=cvprblue]{hyperref}

\definecolor{org}{HTML}{F8A145}
\usepackage{hyperref}
\hypersetup{
  colorlinks=true,
  linkcolor=blue,
  urlcolor=org,
  citecolor=org,
  linktoc=all
}

\usepackage{amsthm}
\theoremstyle{plain}
\newtheorem{thm}{Theorem}
\newtheorem{lem}{Lemma}

\theoremstyle{definition}
\newtheorem{rem}{Remark}
\newtheorem{defn}{Definition}
\newtheorem{assmp}{Assumption}

\newtheorem*{rthm1}{\textbf{Restate of Theorem 1}}
\newtheorem*{rthm2}{\textbf{Restate of Theorem 2}}

\usepackage{multirow}
\usepackage{colortbl}
\usepackage{xcolor}
\definecolor{DisColor}{HTML}{6D339C}
\definecolor{PerColor}{HTML}{285D9B}

\usepackage{url}

\usepackage{tabularx}
\usepackage{pifont}
\usepackage{titletoc}

\def \x {\mathbf{x}}
\def \z {\mathbf{z}}

\usepackage{tcolorbox}
\tcbuselibrary{skins}

\usepackage{fontawesome5} 

\usepackage{algorithmic}
\usepackage[ruled,vlined]{algorithm2e}

\def\paperID{35220} 
\def\confName{CVPR}
\def\confYear{2026}

\title{Guiding Diffusion-based Reconstruction with Contrastive Signals \\for Balanced Visual Representation}

\author{
	Boyu Han\textsuperscript{1,2}\hspace{1em} Qianqian Xu\textsuperscript{1,3,}\thanks{Corresponding authors.}\hspace{1em} Shilong Bao\textsuperscript{2}\hspace{1em} Zhiyong Yang\textsuperscript{2} \\Ruochen Cui\textsuperscript{4,5} \hspace{1em}  Xilin Zhao\textsuperscript{6} \hspace{1em} Qingming Huang\textsuperscript{2,1,*} \\
	{\textsuperscript{1} State Key Laboratory of AI Safety, Institute of Computing Technology, CAS} \\
	{\textsuperscript{2} School of Computer Science and Tech., University of Chinese Academy of Sciences} \\
    {\textsuperscript{3} Beijing Academy of Artificial Intelligence} \\
    {\textsuperscript{4} Institute of Information Engineering, CAS} \\
    {\textsuperscript{5} School of Cyber Security, University of Chinese Academy of Sciences} \\
    {\textsuperscript{6} School of Computer Science and Technology, Beijing Institute of Technology} \\
	{\tt\small \{hanboyu23z, xuqianqian\}@ict.ac.cn \hspace{1em} \{baoshilong,yangzhiyong21,qmhuang\}@ucas.ac.cn } \\ 
    {\tt\small cuiruochen25@mails.ucas.ac.cn \hspace{1em} 1120230539@bit.edu.cn}
}

\begin{document}
\maketitle
\begin{abstract}
The limited understanding capacity of the visual encoder in Contrastive Language-Image Pre-training (CLIP) has become a key bottleneck for downstream performance. This capacity includes both Discriminative Ability (D-Ability), which reflects class separability, and Detail Perceptual Ability (P-Ability), which focuses on fine-grained visual cues. Recent solutions use diffusion models to enhance representations by conditioning image reconstruction on CLIP visual tokens. We argue that such paradigms may \textbf{compromise} D-Ability and therefore \textbf{fail to} effectively address CLIP's representation limitations. To address this, we integrate contrastive signals into diffusion-based reconstruction to pursue more comprehensive visual representations. We begin with a straightforward design that augments the diffusion process with contrastive learning on input images. However, empirical results show that the naive combination suffers from gradient conflict and yields suboptimal performance. To balance the optimization, we introduce the \textbf{Diffusion Contrastive Reconstruction (DCR)}, which unifies the learning objective. The key idea is to inject contrastive signals derived from each reconstructed image, rather than from the original input, into the diffusion process. Our theoretical analysis shows that the DCR loss can jointly optimize D-Ability and P-Ability. Extensive experiments across various benchmarks and multi-modal large language models validate the effectiveness of our method. The code is available at \href{https://github.com/boyuh/DCR}{https://github.com/boyuh/DCR}.
\end{abstract}
\section{Introduction}
\label{sec:introduction}

\begin{figure}[t]
  \centering
   \includegraphics[width=\linewidth]{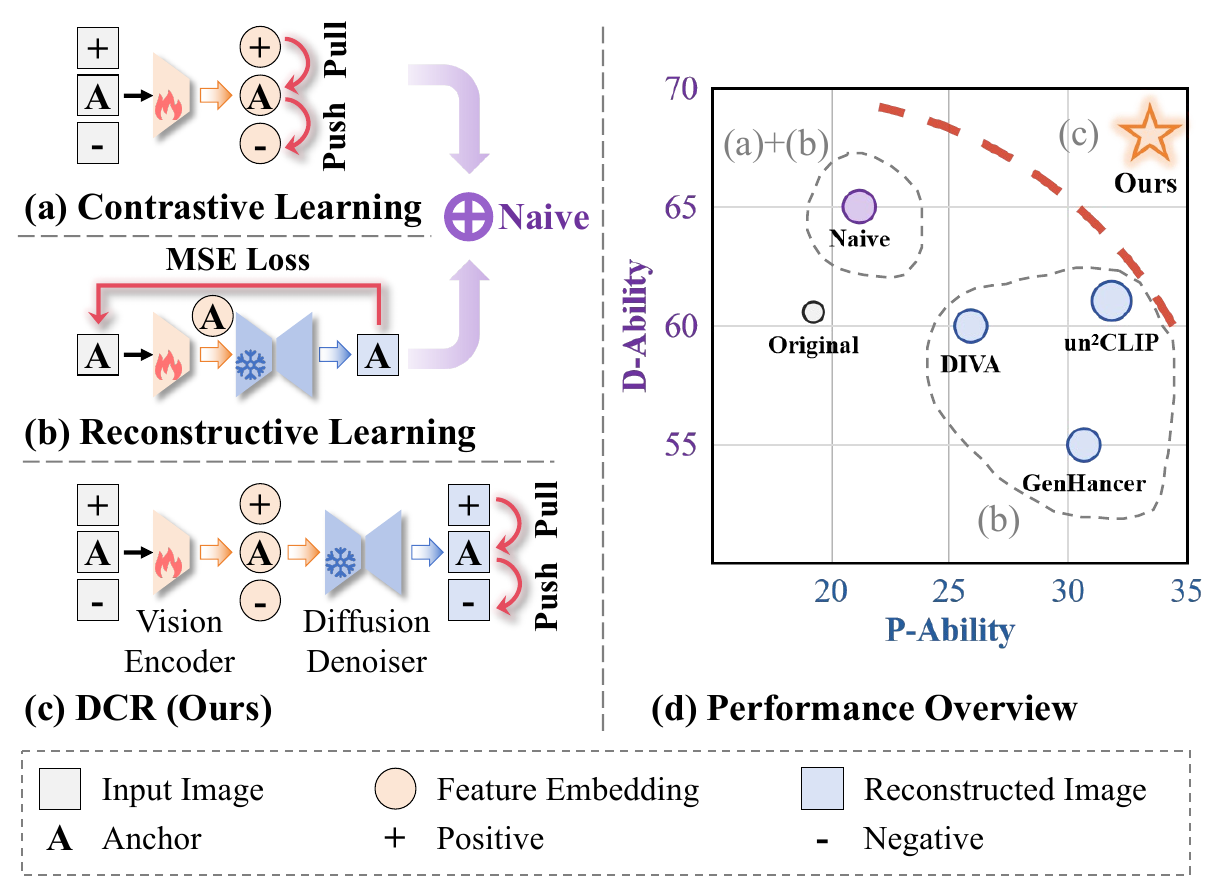}
   \caption{(a) Contrastive learning for \textbf{\textcolor{DisColor}{D-Ability}}. (b) Reconstructive learning for \textbf{\textcolor{PerColor}{P-Ability}}. (c) Our Diffusion Contrastive Reconstruction (DCR) for harmonizing \textbf{\textcolor{DisColor}{D-Ability}} and \textbf{\textcolor{PerColor}{P-Ability}}. (d) Performance overview of DCR and other methods on the OpenAI CLIP ViT-L@224 backbone.}
   \vspace{-10pt}
   \label{fig: intro overview}
\end{figure}

Contrastive Language-Image Pre-training (CLIP)~\cite{radford2021learning} is widely used as a frozen module to provide visual representations for various downstream tasks~\cite{yao2024tcp,zhou2022conditional,zhang2024overcoming}. However, its self-supervised text-image alignment is relatively coarse, which limits the model's understanding capability and has become a major bottleneck to broader deployment~\cite{kim2024extending,wangvitrix}.

The CLIP's understanding capacity has two complementary aspects~\cite{kim2021hybrid,nozawa2022evaluation}. The first is \textbf{\textcolor{DisColor}{Discriminative Ability (D-Ability)}}, which requires the model to separate categories with clear boundaries, keeping samples from the same class close while pushing different classes apart~\cite{li2014fisher,radford2021learning}. It is crucial for recognition~\cite{yan2023clip,wen2025partial,lu2025bidirectional,wang2023unified,zhang2024transfr,han2025codts}, retrieval~\cite{luo2023lexlip,zhang2024overcoming,liu2024not}, and open-vocabulary transfer~\cite{zhou2022learning,yao2024tcp,liu2025reliable,wang2022openauc,huaopenworldauc}. The second is \textbf{\textcolor{PerColor}{Detail Perceptual Ability (P-Ability)}}, which preserves information about color, direction, quantity, and structure~\cite{tong2024eyes}. It supports multimodal question answering~\cite{hudson2019gqa}, instruction following~\cite{liu2024improved}, and vision-centric reasoning~\cite{tong2024cambrian,li2024naturalbench}.

Traditional methods fine-tune the vision encoder to learn better representations under CLIP feature supervision. As shown in \cref{fig: intro overview}(a), they often rely on various forms of contrastive learning~\cite{zhou2022learning,wei2023improving,daiexploring,liu2025future}. With the rise of diffusion models~\cite{zhang2024long,lt2026hong,zhao2025bias}, recent studies turn to generative feedback for stronger enhancement~\cite{wangreconstructive,wangdiffusion,ma2025genhancer,li20252}. As shown in \cref{fig: intro overview}(b), the diffusion model takes CLIP visual features as conditions and reconstructs the image. When the reconstructed image matches the original, the visual representation has captured all the information present in the input. This process builds the reconstruction loss $\mathcal{L}_{\text{rec}}$ through a \textit{Mean-Square Error (MSE)} objective, thereby improving \textbf{\textcolor{PerColor}{P-Ability}}. However, due to the lack of class supervision, it brings little gain or may even harm \textbf{\textcolor{DisColor}{D-Ability}}, as shown in \cref{fig: intro overview}(d). A natural question arises: \textit{Can we achieve a representation that balances discriminative ability and detailed perceptual ability?}

Motivated by the fact that contrastive learning increases class separability, we propose a straightforward variant that adds a contrastive loss $\mathcal{L}_{\text{con}}$ on top of the reconstruction objective to strengthen \textbf{\textcolor{DisColor}{D-Ability}}. However, this naive design does not achieve a good balance in practice, as shown in \cref{fig: intro overview}(d). The challenge lies in the fact that the two objectives focus on different aspects: one targets CLIP feature separability, while the other relies on image-level reconstruction consistency. This mismatch causes the easier objective to dominate the other. We examine the behavior of the contrastive loss $\mathcal{L}_{\text{con}}$ and the reconstruction loss $\mathcal{L}_{\text{rec}}$ during training and find that a gradient conflict indeed exists between them. \cref{fig: Gradient Conflict}(c) shows that their gradient directions often diverge, and the conflict grows over time. As a result, $\mathcal{L}_{\text{con}}$ gradually dominates while $\mathcal{L}_{\text{rec}}$ stalls, which leads to unstable convergence, as illustrated in \cref{fig: Gradient Conflict}(a)(b). A more detailed analysis is provided in \cref{subsec: A Naive Optimization Method}.

We address this challenge with a new framework called \textit{Diffusion Contrastive Reconstruction (DCR)}. \textbf{The key improvement is to replace image-level consistency with contrastive supervision on the reconstructed images}, as shown in \cref{fig: intro overview}(c). Specifically, for each image, the anchor is the reconstruction from its own visual features. The positive is the reconstruction conditioned on its augmented view, and the negatives are reconstructions conditioned on features from other images in the mini-batch. A contrastive loss is then applied to these samples, with the diffusion model remaining frozen. This design leads to a single optimization objective and naturally avoids gradient conflicts.

We further provide a theoretical analysis under a mild assumption and show that our method can satisfy contrastive constraints (Thm. \ref{thm: D-Ability short theorem}) and reconstruction consistency (Thm. \ref{thm: P-Ability short theorem}) at the same time. Finally, comprehensive empirical studies consistently speak to the efficacy of our method.

Our main contributions are summarized as follows:
\begin{itemize}[leftmargin=*]
    \item This paper rethinks diffusion-based reconstruction methods and reveals that they improve the \textbf{\textcolor{PerColor}{P-Ability}} of visual representations, while \textbf{\textcolor{DisColor}{D-Ability}} shows little gain or may even degrade.
    \item We find that directly adding contrastive learning to supplement \textbf{\textcolor{DisColor}{D-Ability}} leads to gradient conflicts. Therefore, this paper proposes the DCR loss, which uses a single objective to avoid gradient conflict, achieving a balance between \textbf{\textcolor{DisColor}{D-Ability}} and \textbf{\textcolor{PerColor}{P-Ability}}.
    \item Our theoretical analysis and extensive experiments across $6$ CLIP backbones and diverse vision benchmarks demonstrate the effectiveness of our method in enhancing visual representations.
\end{itemize}

\begin{figure}
  \centering
  \begin{subfigure}{0.48\linewidth}
    \includegraphics[width=\linewidth]{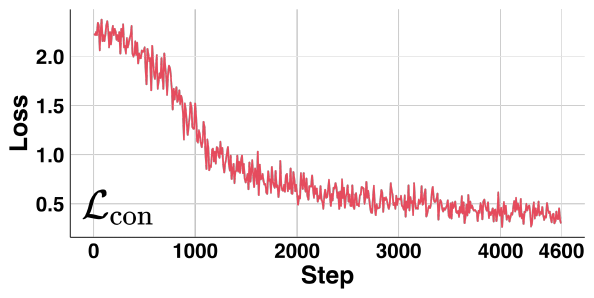}
    \caption{Contrastive loss $\mathcal{L}_{\text{con}}$.}
    \label{fig: loss_con}
  \end{subfigure}
  \hfill
  \begin{subfigure}{0.48\linewidth}
    \includegraphics[width=\linewidth]{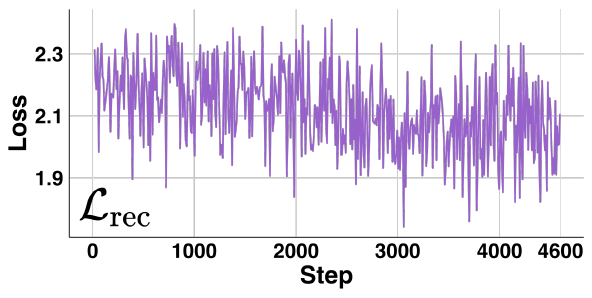}
    \caption{Reconstruction loss $\mathcal{L}_{\text{rec}}$.}
    \label{fig: loss_rec}
  \end{subfigure}
  \hfill
  \begin{subfigure}{\linewidth}
    \includegraphics[width=\linewidth]{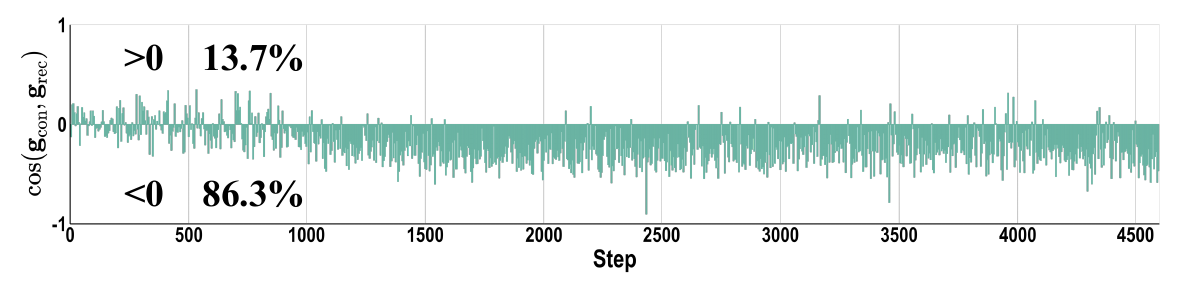}
    \caption{Cosine similarity between gradient $\mathbf{g}_{\text{con}}$ and $\mathbf{g}_{\text{rec}}$.}
    \label{fig: cos_rec_con}
  \end{subfigure}
  \caption{Training results of linearly combining contrastive learning and reconstructive learning (Naive Method) on the OpenAI CLIP ViT-L@224 backbone. The contrastive loss $\mathcal{L}_{\text{con}}$ dominates the gradients, and most training steps exhibit gradient conflicts.}
  \vspace{-15pt}
  \label{fig: Gradient Conflict}
\end{figure}
\section{Related Work}
\label{sec: relatedwork}

\noindent \textbf{Vision Encoders.} Vision encoders serve as the visual backbone of MLLMs~\cite{liu2024improved,liu2023visual,hua2024reconboost,weimoka}, transforming raw images into high-level feature representations. Among various architectures~\cite{radford2021learning,caron2021emerging,oquab2024dinov2,simeoni2025dinov3}, CLIP~\cite{radford2021learning} is one of the most widely adopted models. It also provides transferable representations for a wide range of tasks, including detection~\cite{subramanian2022reclip,mu2022slip,zhong2022regionclip,kim2023region,bao2025towards,yang2024harnessing,yang2026dirmixe,yang2023auc-oriented,han2025dual,luo2025long}, segmentation~\cite{zhou2022extract,xu2022simple,xie2022clims,liang2023open,han2024aucseg}, video understanding~\cite{xu2021videoclip,tang2021clip4caption,lin2022frozen,bose2023movieclip}, and generation~\cite{rombach2022high,nichol2022glide,podellsdxl,esser2024scaling,han2025lightfair,li2025hybrid,lione,qinmixbridge,gao2025devil,gao2025rapo++,leng2023dynamic,leng2024hypersdfusion}. Recent studies focus on enhancing CLIP's understanding capability to improve downstream performance. Some primarily strengthen its discriminative ability~\cite{zhou2022learning,wei2023improving,gong2025boosting}, while others improve its detail perceptual ability~\cite{tong2024eyes,wangreconstructive,wangdiffusion,ma2025genhancer,li20252}. However, an ideal visual encoder should achieve a balance between both aspects. To bridge this gap, this paper aims to develop an enhanced CLIP vision encoder with balanced understanding abilities, providing a stronger foundation for MLLMs.

\vskip 0.5 ex
\noindent \textbf{Generative Models for Representation Learning.} Generative modeling has recently become a powerful paradigm for improving visual representations. Early studies~\cite{shipard2023diversity,li2024semantic,tian2023stablerep} use generative models as data augmenters to expand training data and boost generalization. More recent works~\cite{wangreconstructive,wangdiffusion,ma2025genhancer,li20252} add self-supervised reconstruction objectives to diffusion frameworks, enabling models to learn richer spatial details and fine-grained semantics. Among them, DIVA~\cite{wangdiffusion} introduces diffusion-based visual feedback conditioned on CLIP embeddings, which refines internal representations through generative reconstruction. GenHancer~\cite{ma2025genhancer} extends this reconstruction process to discrete latent spaces, while un$^2$CLIP~\cite{li20252} resolves the mismatch between CLIP's pretrained feature space and generative model latents. However, these methods often overemphasize fine-grained reconstruction while neglecting discriminative learning, leading to suboptimal representations. Moreover, GenHancer and un$^2$CLIP abandon existing pretrained generative models and retrain specialized ones from scratch, incurring substantial computational costs. In contrast, our work simultaneously strengthens discriminative ability and detail perceptual ability, producing a more effective CLIP representation. Furthermore, we directly leverage existing pretrained generative models as the foundation, dramatically reducing training overhead.

\vskip 0.5 ex
\noindent \textbf{Vision-understanding Benchmarks.} The benchmarks can be broadly divided into two camps. The first focuses on assessing discriminative ability. It is typically evaluated through zero-shot clustering tasks~\cite{wen2023graph,deng2023projective} on diverse datasets such as ImageNet~\cite{deng2009imagenet}, Caltech-101~\cite{fei2004learning}, and DTD~\cite{cimpoi2014describing}. These benchmarks measure a model's generalization across domains and assess how well visual encoders preserve transferable recognition ability without task-specific fine-tuning. The second group emphasizes detail perceptual ability. MMVP~\cite{tong2024eyes} introduces a challenging benchmark with nine visual patterns, CV-Bench~\cite{tong2024cambrian} extends evaluation to spatial relations, counting, and 3D understanding, and NaturalBench~\cite{li2024naturalbench} adds natural adversarial examples to assess perceptual robustness. Together, these benchmarks provide a comprehensive evaluation of the visual representation's overall understanding ability.

\begin{figure*}[t]
  \centering
   \includegraphics[width=\linewidth]{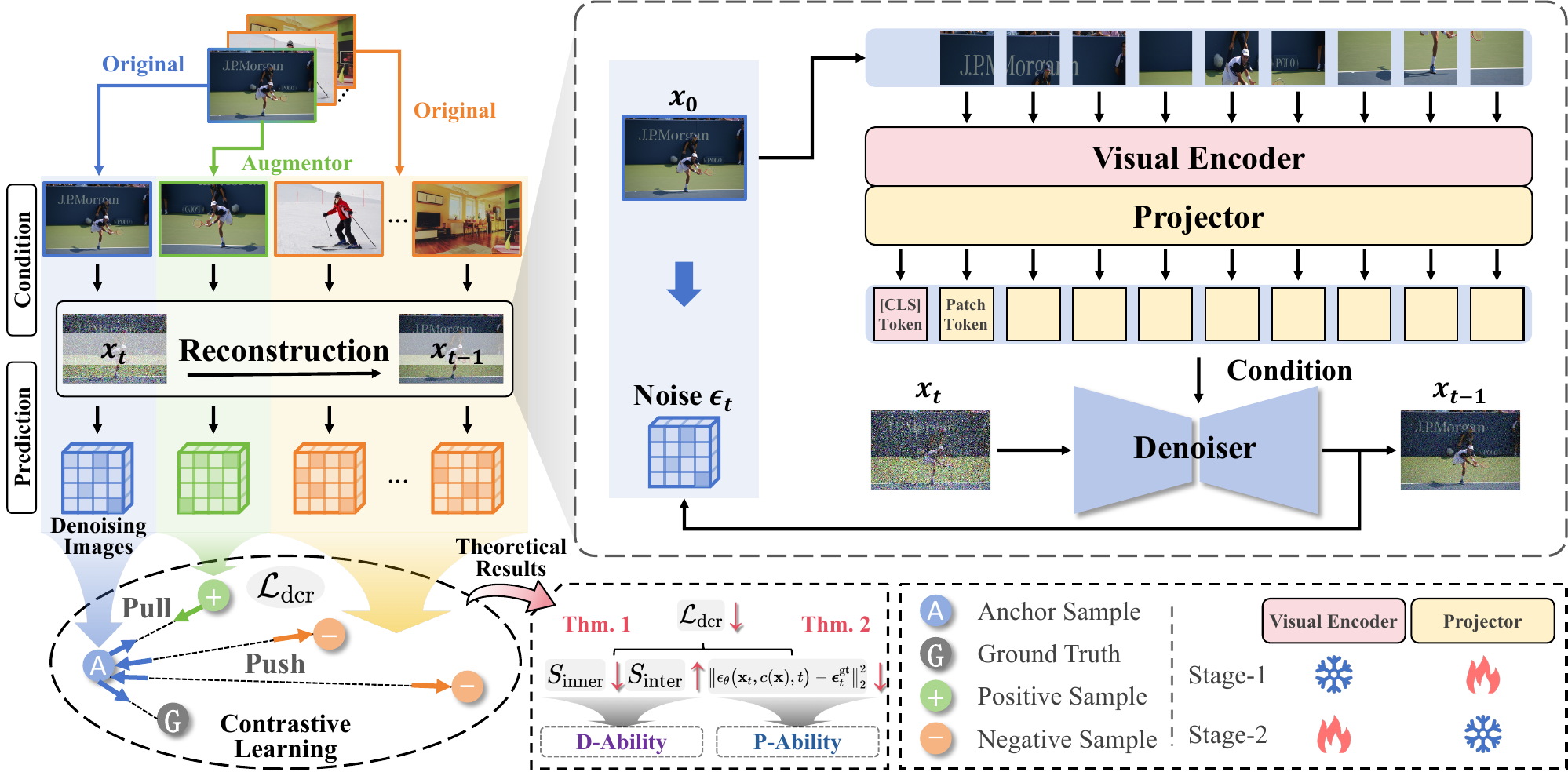}
   \caption{\textbf{An overview of Diffusion Contrastive Reconstruction (DCR).} An image is encoded by CLIP and projected into the diffusion condition space. Predicted noises from original, augmented, and negative samples form a contrastive triplet in the reconstruction image space. Training proceeds in two stages: projector alignment and encoder enhancement.}
   \vspace{-10pt}
   \label{fig: overview}
\end{figure*}
\section{Preliminaries}
\label{sec: preliminaries}

\noindent \textbf{Notation.} Let $\epsilon_{\theta}$ denote the generative model, $f_{\phi}$ denote the CLIP visual encoder, and $h_{\omega}$ denote the projection module that maps CLIP features into the generative space. The training dataset is defined as $\mathcal{D}=\{\x^i\in\mathbb{R}^{H\times W\times 3}\}_{i=1}^{n}$, where $H$ and $W$ denote the image height and width. A \textit{variational autoencoder (VAE)} encodes each image $\x$ into a latent $\tilde{\x}$. We denote the visual feature as $\z = f_{\phi}(\x)$, the condition as $c = h_{\omega}(\z)$, and the predicted noise as $\hat{\boldsymbol\epsilon}_{\theta} = \epsilon_{\theta}(\tilde{\x}_t, c, t)$ at diffusion step $t$, with the ground-truth noise $\boldsymbol\epsilon_t^{\text{gt}}$. For clarity, we include a table of symbol definitions in the Appendix.

\subsection{Diffusion Models}
\label{subsec: Diffusion Models}
Diffusion models~\cite{ho2020denoising} generate data by learning to reverse a gradual noising process. In the forward process, clean samples are progressively corrupted with Gaussian noise over $T$ steps until they become nearly random noise. The reverse process trains a denoising network to predict and remove noise step by step, reconstructing data from noise. The denoising process is formulated as
\begin{equation}
    p_\theta(\x_{t-1}|\x_t)=\mathcal{N}(\x_{t-1};\mu_\theta(\x_t,t),\sigma_t^2\mathbf{I}),
\end{equation}
where $\mu_{\theta}(\x_t, t) = \frac{1}{\sqrt{\alpha_t}}(\x_t-\frac{\beta_t}{\sqrt{1-\bar{\alpha}_t}}\epsilon_{\theta}(\x_t,t))$ is parameterized by the noise prediction network $\epsilon_{\theta}(\x_t,t)$, $\beta_t = 1 - \alpha_t$, and $\sigma_t^2$ is typically chosen as either $\sigma_t^2 = \beta_t$ or $\sigma_t^2 = \frac{1 - \bar{\alpha}_{t-1}}{1 - \bar{\alpha}_t} \beta_t$.

Stable Diffusion~\cite{rombach2022high} extends this framework to conditional generation by adding additional information $c$ into the denoising process. The conditional reverse step is expressed as
\begin{equation}
    p_\theta(\x_{t-1}|\x_t, c)=\mathcal{N}(\x_{t-1};\mu_\theta(\x_t,c,t),\sigma_t^2\mathbf{I}).
\end{equation}
By operating in a latent space through an autoencoder, Stable Diffusion achieves efficient and high-quality text-to-image synthesis, establishing diffusion models as a foundation of modern generative modeling.

\subsection{The Task of Visual Representation Learning}
\label{subsec: Visual Representation Learning}
Visual representation learning aims to learn an encoder $f_{\phi}:\mathcal{X}\!\to\!\mathcal{Z}$ that maps an image $\x$ into a visual representation $\z$. The goal is to obtain a representation with rich understanding capacity, typically characterized by two key properties: strong \textbf{\textcolor{DisColor}{Discriminative Ability (D-Ability)}} and \textbf{\textcolor{PerColor}{Detail Perceptual Ability (P-Ability)}}~\cite{kim2021hybrid,nozawa2022evaluation}.

\vskip 0.5 ex
\textbf{\textcolor{DisColor}{Property 1 (D-Ability):}} Discriminative ability describes a model's ability to accurately and robustly distinguish between different visual categories. A representation with strong discriminative power can effectively separate distinct classes in the feature space, keeping samples from the same class tightly clustered while pushing samples from different classes farther apart~\cite{li2014fisher,radford2021learning}. Mathematically, for a mini-batch with class set $\mathcal Y$ and feature set $\mathcal Z_y = \{\z_i | y_i = y\}$, we define the class center as $\mu_y = \frac{1}{|\mathcal Z_y|} \sum_{\z \in \mathcal Z_y} \z$. It is formulated as
\begin{equation}
\min S_{\text{inner}},\quad \max S_{\text{inter}},
\label{eq: minmax_s}
\end{equation}
\begin{equation}
S_{\text{inner}}=\frac{1}{|\mathcal Y|}\sum_{y\in\mathcal Y}\frac{1}{|\mathcal Z_y|}\sum_{\z\in\mathcal Z_y}|\z-\mu_y|_2^2,
\label{eq: s_inner}
\end{equation}
\begin{equation}
S_{\text{inter}}=\frac{1}{|\mathcal Y|(|\mathcal Y|-1)}\sum_{y,y'\in\mathcal Y,y\neq y'}|\mu_y-\mu_{y'}|_2^2.
\label{eq: s_inter}
\end{equation}

\textbf{\textcolor{PerColor}{Property 2 (P-Ability):}} Detail perceptual ability reflects the model's capacity to capture fine-grained visual details, enabling a comprehensive understanding of object attributes, spatial relations, and subtle semantic cues within an image. Following previous studies~\cite{wangreconstructive,wangdiffusion,ma2025genhancer,li20252}, this perception is usually evaluated through reconstruction consistency. When a diffusion model can accurately reconstruct the original image from its learned representation, the representation is considered sufficiently rich. It is formulated as
\begin{equation}
\min \mathbb{E}_{t}\Big[\|\epsilon_{\theta}(\x_t,\ h_{\omega}(f_{\phi}(\x)),t)-\boldsymbol\epsilon_t^{\text{gt}}\|_2^2\Big],
\label{eq: min_rec}
\end{equation}
where $t$ denotes the diffusion timestep, $f_{\phi}$ denotes the CLIP visual encoder, and $h_{\omega}$ denotes the projector.
\section{Methodology}
\label{sec: methodology}
In this section, we first propose a naive optimization method that aims to pursue the two properties in \cref{subsec: Visual Representation Learning}. Due to gradient conflicts, we further introduce a joint optimization approach called \textit{Diffusion Contrastive Reconstruction (DCR)}. A brief overview is provided in \cref{fig: overview}. DCR enhances CLIP's visual representations through a pretrained conditional diffusion model with contrastive signals. We also provide a theoretical analysis showing how DCR jointly strengthens and balances the D-Ability and P-Ability of the learned representations.

\subsection{A Naive Optimization Method}
\label{subsec: A Naive Optimization Method}

\noindent \textbf{How to improve \textcolor{DisColor}{D-Ability}?} Mainstream approaches commonly use a contrastive loss that pulls same-class features closer and pushes different-class features apart~\cite{radford2021learning}. Specifically, for a mini-batch $\mathcal{B}$ with cosine similarity $\mathrm{sim}(\z_i,\z_j)=\frac{\z_i^\top \z_j}{\|\z_i\|\,\|\z_j\|}$, temperature $\tau>0$, and the positive set $\mathcal{P}(i)=\{j\in\mathcal{B}\setminus\{i\}\mid y_j=y_i\}$, a InfoNCE-style loss~\cite{oord2018representation} is
\begin{equation}
\mathcal{L}_{\text{con}}=\frac{1}{|\mathcal{B}|}\sum_{i\in\mathcal{B}}-\log\frac{\sum_{j\in\mathcal{P}(i)} \exp\big(\mathrm{sim}(\z_i,\z_j)/\tau\big)}
{\sum_{k\in\mathcal{B}\setminus\{i\}} \exp\big(\mathrm{sim}(\z_i,\z_k)/\tau\big)},
\end{equation}
which decreases $S_{\text{inner}}$ and increases $S_{\text{inter}}$ in \cref{eq: minmax_s}.

\vskip 0.5 ex
\noindent \textbf{How to improve \textcolor{PerColor}{P-Ability}?} The objective is typically realized by enforcing consistency between the reconstructed image and the original one~\cite{wangdiffusion}, namely
\begin{equation}
\mathcal{L}_{\text{rec}}
=\mathbb{E}_{t}\Big[\big\|\epsilon_{\theta}\big(\x_t,\ h_{\omega}(f_{\phi}(\x)),\ t\big)-\boldsymbol\epsilon_t^{\text{gt}}\big\|_2^2\Big],
\end{equation}
which is exactly \cref{eq: min_rec}.

To obtain richer understanding representations without altering architectures or data pipelines, we consider a naive method that combines the two losses described above:
\begin{equation}
\mathcal{L}_{\text{joint}} =
\lambda_{\text{con}}\mathcal{L}_{\text{con}} +
\lambda_{\text{rec}}\mathcal{L}_{\text{rec}},
\label{eq: joint_loss}
\end{equation}
where $\lambda_{\text{con}}$ and $\lambda_{\text{rec}}$ are hyperparameters that balance the two objectives.

\vskip 0.5 ex
\noindent \textbf{Limitation}. Although this approach appears promising, we find in practice that it leads to a suboptimal solution, as shown in \cref{fig: intro overview}(d). As suggested in multi-task optimization~\cite{yu2020gradient,sener2018multi,lin2019pareto}, we suspect that \textbf{learning competition} may occur. The contrastive objective encourages class separation, while the reconstruction objective maintains image-level consistency. The mismatch between these two objectives causes the easier one to dominate the other. These conflicting forces often lead to oscillation or degeneration during training, resulting in unstable convergence or even feature collapse.

Specifically, let $\z$ denote the shared visual representation. At training step $t$, the update rule for $\z$ is:
\begin{equation}
\z^{t+1} = \z^{t} - \eta \nabla_{\z}\mathcal{L}_{\text{joint}}
= \z^{t} - \eta \left(
\lambda_{\text{con}}\nabla_{\z}\mathcal{L}_{\text{con}} +
\lambda_{\text{rec}}\nabla_{\z}\mathcal{L}_{\text{rec}}
\right),
\label{eq: joint_update}
\end{equation}
where $\eta$ is the learning rate. Denoting $\mathbf{g}_{\text{con}}=\nabla_{\z}\mathcal{L}_{\text{con}}$ and $\mathbf{g}_{\text{rec}}=\nabla_{\z}\mathcal{L}_{\text{rec}}$, the overall optimization direction depends on the geometric relation between these two gradients. If their directions are inconsistent, the optimization will exhibit \textit{competition} between the two objectives.

\vskip 0.5ex
\noindent\textbf{Gradient consistency and conflict.} To quantify this interaction, we define the cosine similarity between the two gradient directions:
\begin{equation}
\cos(\mathbf{g}_{\text{con}}, \mathbf{g}_{\text{rec}}) =
\frac{\mathbf{g}_{\text{con}}^\top \mathbf{g}_{\text{rec}}}
{\|\mathbf{g}_{\text{con}}\|_2 \|\mathbf{g}_{\text{rec}}\|_2}.
\label{eq: grad_cos}
\end{equation}
When $\cos(\mathbf{g}_{\text{con}}, \mathbf{g}_{\text{rec}}) > 0$, the two objectives cooperate and reinforce each other. However, when $\cos(\mathbf{g}_{\text{con}}, \mathbf{g}_{\text{rec}}) < 0$, they conflict, leading to destructive interference in the gradient updates. 

We conduct experiments on OpenAI CLIP ViT-L/14@224~\cite{radford2021learning} and optimize the model using the joint loss $\mathcal{L}_{\text{joint}}$. As shown in \cref{fig: Gradient Conflict}(a)(b), \textbf{$\mathcal{L}_{\text{con}}$ dominates the optimization process, while $\mathcal{L}_{\text{rec}}$ is suppressed and fails to converge.} \cref{fig: Gradient Conflict}(c) further shows that during training, $86.3\%$ of the steps have a negative cosine similarity $\cos(\mathbf{g}_{\text{con}}, \mathbf{g}_{\text{rec}})$ between the gradients of the two losses, indicating that \textbf{gradient conflict is pervasive throughout the process}. Moreover, as training progresses, this value becomes almost entirely negative, suggesting that gradient conflicts intensify in the later stages of optimization.

\begin{rem}
The above analysis reveals that naive weighted summation cannot effectively reconcile the conflicting learning objectives. Therefore, we argue that the problem should be addressed at its root by optimizing a single objective.
\end{rem}

\subsection{Diffusion Contrastive Reconstruction}
\label{subsec: Diffusion Contrastive Reconstruction}
We propose a new optimization framework, named \textbf{Diffusion Contrastive Reconstruction (DCR)}, that jointly optimizes the two optimization objectives (\cref{eq: minmax_s} and \cref{eq: min_rec}) within a unified formulation. With this single loss, the gradient conflict is naturally resolved.

\textbf{(1) Standard Diffusion-based Reconstruction.} As shown on the right side of \cref{fig: overview}, for an image $\x$, we sample a timestep $t$, obtain the forward noised latent $\tilde{\x}_t$, and condition the denoiser with the image-driven embedding $c=h_{\omega}(f_{\phi}(\tilde{\x}))$. The denoiser $\epsilon_{\theta}$ predicts the noise $\hat{\boldsymbol\epsilon}_{\theta}=\epsilon_{\theta}(\x_t,c,t)$ (abbreviated as $\hat{\boldsymbol\epsilon}$), which determines the next latent $\x_{t-1}$.

\textbf{(2) Injecting Contrastive Signals.} We construct a contrastive triplet in the denoising images, as illustrated on the left side of \cref{fig: overview}. Given an image $\x$, the \textbf{anchor} is the predicted noise when the image itself provides the condition, \textit{i.e.}, $\hat{\boldsymbol\epsilon}=\epsilon_{\theta}(\x_t,\,h_{\omega}(f_{\phi}(\tilde{\x})),\,t)$. We then apply an augmentation $\x^{+}=a(\x)$ (\textit{e.g.}, random crop or color jitter~\cite{chen2020simple,chen2020improved}), compute $c^{+}=h_{\omega}(f_{\phi}(\tilde{\x}^{+}))$, and define the \textbf{positive} as the noise predicted under this augmented condition, $\hat{\boldsymbol\epsilon}_{+}=\epsilon_{\theta}(\x_t,\,c^{+},\,t)$. All other images $\x^{j}$ ($j\neq i$) within the same mini-batch serve as \textbf{negatives}. For each $\x^{j}$ we set $c^{j}=h_{\omega}(f_{\phi}(\tilde{\x}^{\,j}))$ and obtain $\hat{\boldsymbol\epsilon}^{j}_{-}=\epsilon_{\theta}(\x_t,\,c^{j},\,t)$. Additionally, we include the ground-truth diffusion noise $\boldsymbol\epsilon_{t}^{\text{gt}}$ (abbreviated as $\boldsymbol\epsilon_{\text{gt}}$) as an auxiliary target to provide richer supervision.

\textbf{(3) DCR Loss.} Let $d(u,v)=\exp(\mathrm{sim}(u,v)/\tau)$, where $\mathrm{sim}(u,v)=\frac{\langle u,v\rangle}{|u|,|v|}$ denote cosine similarity and $\tau>0$ a temperature. Define set $P=\{\hat{\boldsymbol\epsilon}_{+},\boldsymbol\epsilon_{\text{gt}}\}$, set $N=\{\hat{\boldsymbol\epsilon}_{-}^{k}\}_{k=1}^{N-1}$, and set $C=P\cup N$. Then the loss is
\begin{equation}
\mathcal{L}_{\mathrm{dcr}}=-\frac{1}{2}\sum_{p\in P} \log
\frac{d(\hat{\boldsymbol\epsilon},p)}
{\sum_{c\in C}d(\hat{\boldsymbol\epsilon},c)}.
\label{eq: dcr_loss}
\end{equation}
Intuitively, the core of diffusion models lies in predicting noise. Performing contrastive learning on the predicting noise allows the model to sense fine-grained differences in the original visual representations used as conditions. This helps the vision encoder capture detailed information and enhance its discriminative ability, ultimately achieving a natural balance between \textbf{\textcolor{DisColor}{D-Ability}} and \textbf{\textcolor{PerColor}{P-Ability}}.

\vskip 0.5 ex
\noindent \textbf{Training Protocol.} During training, the generative model $\epsilon_{\theta}$ remains frozen. As shown in \cref{fig: overview}, we use a two-stage schedule to transfer diffusion knowledge into the vision encoder gradually.

\begin{table*}[t!]
  \centering
  \renewcommand\arraystretch{0.85}
  \caption{Performance of \textbf{\textcolor{PerColor}{Detail Perceptual Ability (P-Ability)}} on the MMVP-VLM benchmark. Results of baseline methods are taken from \cite{wangdiffusion,ma2025genhancer,li20252}. Our method outperforms across multiple CLIP backbones and exhibits robustness across various visual patterns. The visual patterns are symbolized as: \textbf{\faCompass}: Orientation and Direction, \textbf{\faSearch}: Presence of Specific Features, \textbf{\faSync}: State and Condition, \textbf{\faSortNumericUp}: Quantity and Count, \textbf{\faMapPin}: Positional and Relational Context, \textbf{\faPalette}: Color and Appearance, \textbf{\faCogs}: Structural and Physical Characteristics, \textbf{\faFont}: Texts, \textbf{\faCamera}: Viewpoint and Perspective.}
  \resizebox{0.9\linewidth}{!}{
    \begin{tabular}{cccc|ccccccccc|c}
    \toprule
    \textbf{CLIP Backbone} & \textbf{Resol.} & \textbf{\#Params} & \textbf{Method} & \textbf{\faCompass} & \textbf{\faSearch} & \textbf{\faSync} & \textbf{\faSortNumericUp} & \textbf{\faMapPin} & \textbf{\faPalette} & \textbf{\faCogs} & \textbf{\faFont} & \textbf{\faCamera} & \textbf{Avg} \\
    \midrule
    \multirow{5}[1]{*}{OpenAI ViT-L-14} & \multirow{5}[1]{*}{224} & \multirow{5}[1]{*}{427.6M} & Original & 13.3  & 13.3  & 20.0  & 20.0  & 13.3  & 53.3  & 20.0  & 6.7   & 13.3  & 19.2  \\
          &       &       & DIVA  & 13.3  & 20.0  & 40.0  & 6.7   & 20.0  & 53.3  & 46.7  & 20.0  & 13.3  & 25.9  \\
          &       &       & GenHancer & 13.3  & 33.3  & 33.3  & 20.0  & 6.7   & 73.3  & 46.7  & 20.0  & 40.0  & 31.8  \\
          &       &       & un$^2$CLIP & 0.0   & 33.3  & 46.7  & 26.7  & 13.3  & 80.0  & 40.0  & 20.0  & 33.3  & 32.6  \\
          &       &       & \cellcolor[rgb]{ .851,  .882,  .957}Ours & \cellcolor[rgb]{ .851,  .882,  .957}13.3  & \cellcolor[rgb]{ .851,  .882,  .957}33.3  & \cellcolor[rgb]{ .851,  .882,  .957}33.3  & \cellcolor[rgb]{ .851,  .882,  .957}26.7  & \cellcolor[rgb]{ .851,  .882,  .957}6.7  & \cellcolor[rgb]{ .851,  .882,  .957}73.3  & \cellcolor[rgb]{ .851,  .882,  .957}46.7  & \cellcolor[rgb]{ .851,  .882,  .957}20.0  & \cellcolor[rgb]{ .851,  .882,  .957}46.7  & \cellcolor[rgb]{ .851,  .882,  .957}\textbf{33.3}  \\
    \midrule
    \multirow{5}[1]{*}{OpenAI ViT-L-14} & \multirow{5}[1]{*}{336} & \multirow{5}[1]{*}{427.9M} & Original & 0.0   & 20.0  & 40.0  & 20.0  & 6.7   & 20.0  & 33.3  & 6.7   & 33.3  & 20.0  \\
          &       &       & DIVA  & 26.7  & 20.0  & 33.3  & 13.3  & 13.3  & 46.7  & 26.7  & 6.7   & 40.0  & 25.2  \\
          &       &       & GenHancer & 6.7   & 20.0  & 33.3  & 20.0  & 6.7   & 73.3  & 53.3  & 26.7  & 26.7  & 29.6  \\
          &       &       & un$^2$CLIP & 6.7   & 33.3  & 46.7  & 13.3  & 13.3  & 80.0  & 40.0  & 20.0  & 20.0  & 30.4  \\
          &       &       & \cellcolor[rgb]{ .851,  .882,  .957}Ours & \cellcolor[rgb]{ .851,  .882,  .957}13.3  & \cellcolor[rgb]{ .851,  .882,  .957}26.7  & \cellcolor[rgb]{ .851,  .882,  .957}33.3  & \cellcolor[rgb]{ .851,  .882,  .957}20.0  & \cellcolor[rgb]{ .851,  .882,  .957}6.7  & \cellcolor[rgb]{ .851,  .882,  .957}73.3  & \cellcolor[rgb]{ .851,  .882,  .957}46.7  & \cellcolor[rgb]{ .851,  .882,  .957}33.3  & \cellcolor[rgb]{ .851,  .882,  .957}26.7  & \cellcolor[rgb]{ .851,  .882,  .957}\textbf{31.1}  \\
    \midrule
    \midrule
    \multirow{5}[1]{*}{MetaCLIP ViT-L-14} & \multirow{5}[1]{*}{224} & \multirow{5}[1]{*}{427.6M} & Original & 13.3  & 6.7   & 66.7  & 6.7   & 33.3  & 46.7  & 20.0  & 6.7   & 13.3  & 23.7  \\
          &       &       & DIVA  & 6.7   & 6.7   & 60.0  & 0.0   & 26.7  & 66.7  & 20.0  & 20.0  & 40.0  & 27.4  \\
          &       &       & GenHancer & 13.3  & 20.0  & 53.3  & 13.3  & 26.7  & 80.0  & 33.3  & 13.3  & 33.3  & 31.8  \\
          &       &       & un$^2$CLIP & -     & -     & -     & -     & -     & -     & -     & -     & -     & - \\
          &       &       & \cellcolor[rgb]{ .851,  .882,  .957}Ours & \cellcolor[rgb]{ .851,  .882,  .957}20.0 & \cellcolor[rgb]{ .851,  .882,  .957}20.0 & \cellcolor[rgb]{ .851,  .882,  .957}53.3 & \cellcolor[rgb]{ .851,  .882,  .957}13.3 & \cellcolor[rgb]{ .851,  .882,  .957}26.7 & \cellcolor[rgb]{ .851,  .882,  .957}80.0 & \cellcolor[rgb]{ .851,  .882,  .957}33.3 & \cellcolor[rgb]{ .851,  .882,  .957}13.3 & \cellcolor[rgb]{ .851,  .882,  .957}33.3 & \cellcolor[rgb]{ .851,  .882,  .957}\textbf{32.6} \\
    \midrule
    \multirow{5}[1]{*}{MetaCLIP ViT-H-14} & \multirow{5}[1]{*}{224} & \multirow{5}[1]{*}{986.1M} & Original & 6.7   & 13.3  & 60.0  & 13.3  & 6.7   & 53.3  & 26.7  & 13.3  & 33.3  & 25.2  \\
          &       &       & DIVA  & 13.3  & 20.0  & 53.3  & 33.3  & 13.3  & 66.7  & 33.3  & 13.3  & 40.0  & 31.8  \\
          &       &       & GenHancer & 20.0  & 20.0  & 66.7  & 26.7  & 26.7  & 66.7  & 33.3  & 20.0  & 53.3  & 37.0  \\
          &       &       & un$^2$CLIP & -     & -     & -     & -     & -     & -     & -     & -     & -     & - \\
          &       &       & \cellcolor[rgb]{ .851,  .882,  .957}Ours & \cellcolor[rgb]{ .851,  .882,  .957}20.0 & \cellcolor[rgb]{ .851,  .882,  .957}20.0 & \cellcolor[rgb]{ .851,  .882,  .957}66.7 & \cellcolor[rgb]{ .851,  .882,  .957}26.7 & \cellcolor[rgb]{ .851,  .882,  .957}26.7 & \cellcolor[rgb]{ .851,  .882,  .957}66.7 & \cellcolor[rgb]{ .851,  .882,  .957}33.3 & \cellcolor[rgb]{ .851,  .882,  .957}26.7 & \cellcolor[rgb]{ .851,  .882,  .957}53.3 & \cellcolor[rgb]{ .851,  .882,  .957}\textbf{37.8} \\
    \midrule
    \midrule
    \multirow{5}[1]{*}{SigLIP ViT-SO-14} & \multirow{5}[1]{*}{224} & \multirow{5}[1]{*}{877.4M} & Original & 26.7  & 20.0  & 53.3  & 40.0  & 20.0  & 66.7  & 40.0  & 20.0  & 53.3  & 37.8  \\
          &       &       & DIVA  & 13.3  & 26.7  & 60.0  & 46.7  & 13.3  & 73.3  & 53.3  & 26.7  & 53.3  & 40.7  \\
          &       &       & GenHancer & 20.0  & 20.0  & 66.7  & 60.0  & 20.0  & 86.7  & 40.0  & 13.0  & 53.3  & 42.2  \\
          &       &       & un$^2$CLIP & -     & -     & -     & -     & -     & -     & -     & -     & -     & - \\
          &       &       & \cellcolor[rgb]{ .851,  .882,  .957}Ours & \cellcolor[rgb]{ .851,  .882,  .957}13.3 & \cellcolor[rgb]{ .851,  .882,  .957}26.7 & \cellcolor[rgb]{ .851,  .882,  .957}73.3 & \cellcolor[rgb]{ .851,  .882,  .957}60.0 & \cellcolor[rgb]{ .851,  .882,  .957}20.0 & \cellcolor[rgb]{ .851,  .882,  .957}80.0 & \cellcolor[rgb]{ .851,  .882,  .957}40.0 & \cellcolor[rgb]{ .851,  .882,  .957}20.0 & \cellcolor[rgb]{ .851,  .882,  .957}53.3 & \cellcolor[rgb]{ .851,  .882,  .957}\textbf{43.0} \\
    \midrule
    \multirow{5}[1]{*}{SigLIP ViT-SO-14} & \multirow{5}[1]{*}{384} & \multirow{5}[1]{*}{878.0M} & Original & 20.0  & 26.7  & 60.0  & 33.3  & 13.3  & 66.7  & 33.3  & 26.7  & 53.3  & 37.0  \\
          &       &       & DIVA  & 26.7  & 33.3  & 53.3  & 26.7  & 13.3  & 80.0  & 40.0  & 26.7  & 46.7  & 38.5  \\
          &       &       & GenHancer & 26.7  & 20.0  & 66.7  & 33.3  & 13.3  & 86.7  & 40.0  & 26.7  & 46.7  & 40.0  \\
          &       &       & un$^2$CLIP & 20.0  & 20.0  & 60.0  & 46.7  & 26.7  & 73.3  & 40.0  & 26.7  & 60.0  & 41.5  \\
          &       &       & \cellcolor[rgb]{ .851,  .882,  .957}Ours & \cellcolor[rgb]{ .851,  .882,  .957}26.7 & \cellcolor[rgb]{ .851,  .882,  .957}33.3 & \cellcolor[rgb]{ .851,  .882,  .957}60.0 & \cellcolor[rgb]{ .851,  .882,  .957}33.3 & \cellcolor[rgb]{ .851,  .882,  .957}26.7 & \cellcolor[rgb]{ .851,  .882,  .957}80.0 & \cellcolor[rgb]{ .851,  .882,  .957}40.0 & \cellcolor[rgb]{ .851,  .882,  .957}33.3 & \cellcolor[rgb]{ .851,  .882,  .957}46.7 & \cellcolor[rgb]{ .851,  .882,  .957}\textbf{42.2} \\
    \bottomrule
    \end{tabular}%
    }
\vspace{-10pt}
\label{tab: understanding ability results}
\end{table*}

\begin{itemize}
\item \textbf{Stage-1 (projector alignment).} We freeze the visual encoder $f_{\phi}$ and train only the projector $h_{\omega}$. This stage learns a conditional mapping that aligns the visual guidance with the text guidance in the original diffusion model, ensuring that the frozen denoiser can correctly interpret the image-based conditions.

\item \textbf{Stage-2 (encoder enhancement).} We then freeze the projector $h_{\omega}$ and train the visual encoder $f_{\phi}$. Since $h_{\omega}$ has been aligned to the denoiser in Stage-1, gradients from \cref{eq: dcr_loss} now directly refine the feature structure produced by $f_{\phi}$. Features from the same class are guided to produce conditions that yield similar predicted noises, while features from different classes are guided to produce conditions with dissimilar predicted noises. After this stage, we obtain a CLIP visual encoder with richer visual representations.
\end{itemize}

\subsection{Theoretical Statements}
\label{subsec: Theoretical Statements}
Taking a step further, we conduct a theoretical analysis. The results show that the DCR loss is designed to satisfy the two objectives discussed in \cref{subsec: Visual Representation Learning}. We present these insights in the following two theorems. Due to space limitations, we provide a concise version here, and the proofs are deferred to the Appendix.

\begin{thm}
Fix a diffusion step $t$ and define $T(\z)=\epsilon_{\theta}\bigl(x_t,\,h_{\omega}(\z),\,t\bigr)$. Under a mild assumption, the intra-class scatter and inter-class scatter in the \emph{feature} space ($S_{\text{inner}}, S_{\text{inter}}$) can be bounded by those in the \emph{noise} space ($S_{\text{inner}}^{(\epsilon)}(t), S_{\text{inter}}^{(\epsilon)}(t)$) as
\begin{equation}
S_{\text{inner}}\le\frac{1}{m^2}\,S_{\text{inner}}^{(\epsilon)}(t),
\end{equation}
\begin{equation}
S_{\text{inter}}\ge\kappa S_{\text{inter}}^{(\epsilon)}(t)-\eta S_{\text{inner}}^{(\epsilon)}(t),
\end{equation}
where $m$, $\kappa$, and $\eta$ are positive constants depending on the Lipschitz continuity of $T(\cdot)$.
\label{thm: D-Ability short theorem}
\end{thm}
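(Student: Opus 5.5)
The "mild assumption" is presumably a bi-Lipschitz condition on $T$. I will take it in the form: there exist $0<m\le M$ such that for all features $\z,\z'$,
\[
m\|\z-\z'\|_2\le\|T(\z)-T(\z')\|_2\le M\|\z-\z'\|_2 .
\]
The lower bound says the frozen denoiser composed with the projector does not collapse distinct conditions. I define the noise-space scatters $S^{(\epsilon)}_{\text{inner}}(t)$ and $S^{(\epsilon)}_{\text{inter}}(t)$ exactly as in \cref{eq: s_inner,eq: s_inter}, with each $\z$ replaced by $\boldsymbol\epsilon=T(\z)$. The class centers there are $\mu^{(\epsilon)}_y=\frac{1}{|\mathcal Z_y|}\sum_{\z\in\mathcal Z_y}T(\z)$.

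\textbf{Intra-class bound.} I will use the pairwise identity
\[
\frac{1}{|\mathcal Z_y|}\sum_{\z\in\mathcal Z_y}\|\z-\mu_y\|_2^2=\frac{1}{2|\mathcal Z_y|^2}\sum_{\z,\z'\in\mathcal Z_y}\|\z-\z'\|_2^2 .
\]
It holds for any point set, so it applies equally to the noise vectors $T(\z)$ and their centroid $\mu_y^{(\epsilon)}$. The pairwise form matters because $T(\mu_y)\neq\mu^{(\epsilon)}_y$ in general. Each pair satisfies $\|\z-\z'\|^2\le m^{-2}\|T(\z)-T(\z')\|^2$. Summing over pairs and averaging over classes gives $S_{\text{inner}}\le m^{-2}S^{(\epsilon)}_{\text{inner}}(t)$ directly.

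\textbf{Inter-class bound.} This is the main obstacle, because centers do not commute with $T$. I split
\[
\mu^{(\epsilon)}_y-\mu^{(\epsilon)}_{y'}=\bigl[T(\mu_y)-T(\mu_{y'})\bigr]+\bigl[\mu^{(\epsilon)}_y-T(\mu_y)\bigr]-\bigl[\mu^{(\epsilon)}_{y'}-T(\mu_{y'})\bigr].
\]
Using $\|a+b+c\|^2\le3(\|a\|^2+\|b\|^2+\|c\|^2)$, the first bracket is bounded by $M^2\|\mu_y-\mu_{y'}\|^2$. For the other two, Jensen and Lipschitz give
\[
\|\mu^{(\epsilon)}_y-T(\mu_y)\|^2\le\frac{1}{|\mathcal Z_y|}\sum_{\z\in\mathcal Z_y}\|T(\z)-T(\mu_y)\|^2\le M^2\cdot\frac{1}{|\mathcal Z_y|}\sum_{\z\in\mathcal Z_y}\|\z-\mu_y\|^2 .
\]
By the intra-class bound above, this feature-space within-class term is at most $m^{-2}$ times the corresponding noise-space term.

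Averaging over ordered pairs $y\ne y'$ turns each per-class term into a multiple of $S_{\text{inner}}$, with the factor $2$ for the two endpoints absorbed into constants. This yields
\[
S^{(\epsilon)}_{\text{inter}}\le3M^2S_{\text{inter}}+6\frac{M^2}{m^2}S^{(\epsilon)}_{\text{inner}} .
\]
Rearranging gives the claim with $\kappa=\frac{1}{3M^2}$ and $\eta=\frac{2}{m^2}$, both positive and depending only on the Lipschitz constants. Alternatively, a $(1+\delta)$ Young split in place of the factor $3$ would give sharper constants, but any positive constants suffice for the statement.

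The delicate points are stating the assumption honestly, since the lower Lipschitz bound is what gives $m$, and handling the centroid mismatch. Everything else is algebra.
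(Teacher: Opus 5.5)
Your proposal is correct and follows essentially the paper's route. The intra-class bound uses the pairwise variance identity with the lower Lipschitz constant, and the inter-class bound anchors the noise-space class means at $T(\mu_y)$, controls the mismatch $\|\mu^{(\epsilon)}_y-T(\mu_y)\|$ by the upper Lipschitz constant times the within-class feature scatter, and then substitutes the intra-class bound. The only difference is that you stay in squared norms ($\|a+b+c\|^2\le 3(\|a\|^2+\|b\|^2+\|c\|^2)$ and Jensen), whereas the paper uses the reverse triangle inequality, Cauchy--Schwarz and $(a-b-c)^2\ge\frac12 a^2-2(b^2+c^2)$; this gives you $\kappa=1/(3M^2)$, $\eta=2/m^2$ in place of the paper's $\kappa=1/(2L^2)$, $\eta=4/m^2$.
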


\begin{thm}
If negatives in the mini-batch are well separated from the anchor (\textit{i.e.}, $\mathrm{sim}(\hat{\boldsymbol\epsilon},\hat{\boldsymbol\epsilon}^{j}_{-})\ll \mathrm{sim}(\hat{\boldsymbol\epsilon},\boldsymbol\epsilon_{\text{gt}})$ for all $j\neq i$), and predicted-noise norms are bounded away from $0$ and $\infty$, then the DCR loss reduces to a scaled reconstruction loss up to an additive constant:
\begin{equation}
\mathcal{L}_{\mathrm{dcr}}=\lambda\big\|\epsilon_{\theta}(\x_t,\ h_{\omega}(f_{\phi}(\tilde{\x})),t)-\boldsymbol\epsilon_t^{\text{gt}}\big\|_2^2 + c,
\end{equation}
where $\lambda>0$, and $c$ is a constant.
\label{thm: P-Ability short theorem}
\end{thm}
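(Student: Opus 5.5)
The plan is to expand the DCR loss for the ground-truth positive and then treat the negatives and the augmented positive as perturbations that contribute only constants. Write $a=\hat{\boldsymbol\epsilon}$ for the anchor and $s_p=\mathrm{sim}(a,p)$. Then
\begin{equation}
\mathcal{L}_{\mathrm{dcr}}=-\frac{1}{2\tau}\bigl(s_{+}+s_{\text{gt}}\bigr)+\log\Bigl(e^{s_{+}/\tau}+e^{s_{\text{gt}}/\tau}+\sum_{j}e^{s^{j}_{-}/\tau}\Bigr).
\end{equation}
The separation hypothesis $s^{j}_{-}\ll s_{\text{gt}}$ is used first. It makes every negative term exponentially small relative to $e^{s_{\text{gt}}/\tau}$, so the sum over $j$ can be dropped at the cost of an additive error $O\bigl(\sum_j e^{(s^j_- - s_{\text{gt}})/\tau}\bigr)$, which we absorb into the constant $c$.

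Next we handle the augmented positive. Since $c^{+}$ comes from an augmented view of the same image, and the projector and denoiser are frozen and (we assume) Lipschitz, $s_{+}$ is treated as approximately fixed with respect to the quantity of interest. What remains is a function of $s_{\text{gt}}$ alone, of the form $g(s)=-\tfrac{s}{2\tau}+\log(A+e^{s/\tau})$ with $A=e^{s_+/\tau}$ fixed. This function is smooth and strictly decreasing on $[-1,1]$, because $g'(s)=\tfrac1\tau\bigl(\sigma-\tfrac12\bigr)$ with $\sigma=e^{s/\tau}/(A+e^{s/\tau})$, and $\sigma<1/2$ whenever $s<s_{+}$. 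If instead the $s_{+}$ term is considered near $1$, I would linearize $g$ around the operating point, giving $g(s)\approx g(s_0)-\lambda'(s-s_0)$ with $\lambda'>0$.

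The link to MSE comes from the polarization identity
\begin{equation}
\|a-\boldsymbol\epsilon_{\text{gt}}\|_2^2=\|a\|^2+\|\boldsymbol\epsilon_{\text{gt}}\|^2-2\|a\|\|\boldsymbol\epsilon_{\text{gt}}\|\,s_{\text{gt}}.
\end{equation}
Using the hypothesis that the noise norms are bounded in $[r,R]$ with $r>0$, we approximate $\|a\|\|\boldsymbol\epsilon_{\text{gt}}\|$ by a fixed value $\rho^2$ (for Gaussian noise in high dimension the norms concentrate near $\sqrt d$). This gives $s_{\text{gt}}=1-\tfrac{1}{2\rho^2}\|a-\boldsymbol\epsilon_{\text{gt}}\|^2$ up to constants. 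Substituting into the linearized $g$ yields $\mathcal{L}_{\mathrm{dcr}}=\lambda\|a-\boldsymbol\epsilon_{\text{gt}}\|^2+c$ with $\lambda=\lambda'/(2\rho^2)>0$, and all residual terms are collected into $c$.

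The main obstacle is justifying that the derivative of the log-sum-exp part, and the norm product, are effectively constant, so that the relation is an exact affine equality rather than just monotone. I would state the result as an equality up to first order, or inside a window where the norms are approximately equal. The bounded-norm assumption is what keeps the derivative $\lambda$ sandwiched between positive constants, so that at minimum $\mathcal{L}_{\mathrm{dcr}}$ is bi-Lipschitz equivalent to the reconstruction loss.
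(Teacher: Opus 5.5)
Your first step matches the paper: the negatives' total weight is at most $Be^{-\Delta/\tau}$ times the positives', so dropping them costs at most $\log(1+Be^{-\Delta/\tau})$. The gap is in how you handle the positive part. Once the negatives are removed, the positive part is exactly
\[
-\tfrac{1}{2\tau}\bigl(s_{+}+s_{\text{gt}}\bigr)+\log\bigl(e^{s_{+}/\tau}+e^{s_{\text{gt}}/\tau}\bigr)=\log\Bigl(2\cosh\frac{s_{\text{gt}}-s_{+}}{2\tau}\Bigr).
\]
So your $g$ is minimized at $s=s_{+}$. It decreases on $[-1,s_{+}]$ and increases on $[s_{+},1]$, as your own formula $g'=\frac{1}{\tau}(\sigma-\frac12)$ shows, so the claim that $g$ is strictly decreasing on $[-1,1]$ is false. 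The linearized slope $\lambda'=\frac{1}{\tau}\bigl(\frac12-\sigma(s_0)\bigr)$ is governed by the gap $s_{+}-s_0$, not by the noise norms. It tends to $0$ as $s_0\to s_{+}$ (even when $s_{+}=1$) and is negative beyond. Hence the bounded-norm hypothesis does not keep $\lambda$ between positive constants, and your concluding bi-Lipschitz equivalence does not follow.

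Your argument also relies on two assumptions that are not in the theorem. First, you treat $s_{+}$ as frozen, but in Stage-2 it depends on $\phi$ through both $\hat{\boldsymbol\epsilon}$ and $\hat{\boldsymbol\epsilon}_{+}$. Second, you effectively need $\|\hat{\boldsymbol\epsilon}\|=\|\boldsymbol\epsilon_{\text{gt}}\|$. With $d_{\text{gt}}^2=2(1-s_{\text{gt}})$, the exact identity $\|\hat{\boldsymbol\epsilon}-\boldsymbol\epsilon_{\text{gt}}\|^2=(\|\hat{\boldsymbol\epsilon}\|-\|\boldsymbol\epsilon_{\text{gt}}\|)^2+\|\hat{\boldsymbol\epsilon}\|\,\|\boldsymbol\epsilon_{\text{gt}}\|\,d_{\text{gt}}^2$ carries a non-constant first term, which replacing the norm product by $\rho^2$ ignores.

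The paper sidesteps monotonicity entirely. It writes $s_{\text{gt}}=1-\frac12 d_{\text{gt}}^2$ and $s_{+}=1-\frac12 d_{+}^2$, both squared distances lying in $[0,4]$. This gives $\ell_{\text{pos}}=\frac{1}{4\tau}(d_{\text{gt}}^2+d_{+}^2)+\log\bigl(e^{-d_{\text{gt}}^2/(2\tau)}+e^{-d_{+}^2/(2\tau)}\bigr)$. The paper then absorbs the $d_{+}^2$ term and the log-sum-exp term into constants depending only on $\tau$, so $\frac{1}{4\tau}d_{\text{gt}}^2+\tilde C_{\min}\le\ell_{\text{pos}}\le\frac{1}{4\tau}d_{\text{gt}}^2+\tilde C_{\max}$. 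The identity above with norms in $[\alpha,\beta]$ gives $\alpha^2 d_{\text{gt}}^2\le\|\hat{\boldsymbol\epsilon}-\boldsymbol\epsilon_{\text{gt}}\|^2\le 2\beta^2+\beta^2 d_{\text{gt}}^2$. The result is a two-sided affine sandwich with slopes $\frac{1}{4\tau\beta^2}$ and $\frac{1}{4\tau\alpha^2}$, which the paper summarizes as $\lambda\in[\lambda_{\min},\lambda_{\max}]$.

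This route uses only the stated hypotheses: no frozen $s_{+}$, no norm concentration, no linearization. The price is that it is an affine bound with bounded additive slack, not a monotone or bi-Lipschitz equivalence. The $\cosh$ identity shows the loss is not even monotone in $s_{\text{gt}}$ once $s_{\text{gt}}>s_{+}$. To repair your proof, replace ``freeze $s_{+}$ and linearize'' with ``bound the $d_{+}^2$ and log-sum-exp terms by constants.'' If you want to keep your sharper first-order statement, you must add explicit hypotheses such as $s_{\text{gt}}\le s_{+}-\gamma$ for some $\gamma>0$, together with a frozen $s_{+}$.
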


\begin{rem}
According to Theorem \ref{thm: D-Ability short theorem}, as $\mathcal{L}_{\mathrm{dcr}}$ decreases, we obtain $S_{\text{inner}}\downarrow$ and $S_{\text{inter}}\uparrow$ in the CLIP feature space, which fulfills the discriminative objective in \cref{eq: minmax_s}. According to Theorem \ref{thm: P-Ability short theorem}, minimizing $\mathcal{L}_{\mathrm{dcr}}$ becomes equivalent to minimizing the detail perceptual objective in \cref{eq: min_rec}.
\end{rem}

\begin{table*}[t!]
  \centering
  \renewcommand\arraystretch{1}
  \caption{Performance of \textbf{\textcolor{DisColor}{Discriminative Ability (D-Ability)}} on 6 standard zero-shot clustering benchmarks. Our method achieves better class separability. O-1, M-1, and S-1 separately represent OpenAI CLIP ViT-L@224, MetaCLIP ViT-L@224, and SigLIP ViT-SO@224.}
  \resizebox{\linewidth}{!}{
    \begin{tabular}{c|c|cccccccccccccccccc|ccc}
    \toprule
    \multirow{2}[4]{*}{\textbf{Backbone}} & \multirow{2}[4]{*}{\textbf{Method}} & \multicolumn{3}{c}{\textbf{MNIST}~\cite{lecun2002gradient}} & \multicolumn{3}{c}{\textbf{CIFAR-10}~\cite{krizhevsky2009learning}} & \multicolumn{3}{c}{\textbf{Eurosat}~\cite{helber2019eurosat}} & \multicolumn{3}{c}{\textbf{Caltech-101}~\cite{fei2004learning}} & \multicolumn{3}{c}{\textbf{DTD}~\cite{cimpoi2014describing}} & \multicolumn{3}{c|}{\textbf{ImageNet-1K}~\cite{deng2009imagenet}} & \multicolumn{3}{c}{\textbf{Avg}} \\
    \cmidrule(lr){3-5}\cmidrule(lr){6-8}\cmidrule(lr){9-11}\cmidrule(lr){12-14}\cmidrule(lr){15-17}\cmidrule(lr){18-20}\cmidrule(lr){21-23}
    & & NMI & ACC & ARI & NMI & ACC & ARI & NMI & ACC & ARI & NMI & ACC & ARI & NMI & ACC & ARI & NMI & ACC & ARI & NMI & ACC & ARI \\
    \midrule
    \multirow{5}[2]{*}{O-1} & Original & 0.63  & 0.63  & 0.50  & 0.82  & 0.82  & 0.76  & 0.71  & 0.72  & 0.61  & 0.78  & 0.53  & 0.35  & 0.55  & 0.45  & 0.30  & 0.79  & 0.53  & 0.41  & 0.71  & 0.61  & 0.49  \\
          & DIVA  & 0.66  & 0.60  & 0.49  & 0.81  & 0.82  & 0.75  & 0.70  & 0.72  & 0.61  & 0.78  & 0.52  & 0.34  & 0.51  & 0.41  & 0.26  & 0.79  & 0.52  & 0.40  & 0.71  & 0.60  & 0.48  \\
          & GenHancer & 0.69  & 0.73  & 0.60  & 0.58  & 0.65  & 0.49  & 0.61  & 0.67  & 0.56  & 0.81  & 0.54  & 0.38  & 0.56  & 0.45  & 0.31  & 0.64  & 0.26  & 0.14  & 0.65  & 0.55  & 0.41  \\
          & un$^2$CLIP & 0.67  & 0.70  & 0.53  & 0.81  & 0.82  & 0.74  & 0.64  & 0.75  & 0.60  & 0.80  & 0.55  & 0.41  & 0.55  & 0.46  & 0.31  & 0.73  & 0.41  & 0.28  & 0.70  & 0.62  & 0.48  \\
          & \cellcolor[rgb]{ .851,  .882,  .957}Ours & \cellcolor[rgb]{ .851,  .882,  .957}0.73  & \cellcolor[rgb]{ .851,  .882,  .957}0.74  & \cellcolor[rgb]{ .851,  .882,  .957}0.62  & \cellcolor[rgb]{ .851,  .882,  .957}0.89  & \cellcolor[rgb]{ .851,  .882,  .957}0.86  & \cellcolor[rgb]{ .851,  .882,  .957}0.80  & \cellcolor[rgb]{ .851,  .882,  .957}0.68  & \cellcolor[rgb]{ .851,  .882,  .957}0.73  & \cellcolor[rgb]{ .851,  .882,  .957}0.61  & \cellcolor[rgb]{ .851,  .882,  .957}0.83  & \cellcolor[rgb]{ .851,  .882,  .957}0.57  & \cellcolor[rgb]{ .851,  .882,  .957}0.43  & \cellcolor[rgb]{ .851,  .882,  .957}0.58  & \cellcolor[rgb]{ .851,  .882,  .957}0.48  & \cellcolor[rgb]{ .851,  .882,  .957}0.33  & \cellcolor[rgb]{ .851,  .882,  .957}0.84  & \cellcolor[rgb]{ .851,  .882,  .957}0.66  & \cellcolor[rgb]{ .851,  .882,  .957}0.45  & \cellcolor[rgb]{ .851,  .882,  .957}\textbf{0.76} & \cellcolor[rgb]{ .851,  .882,  .957}\textbf{0.67} & \cellcolor[rgb]{ .851,  .882,  .957}\textbf{0.54} \\
    \midrule
    \multirow{4}[2]{*}{M-1} & Original & 0.59  & 0.62  & 0.47  & 0.79  & 0.73  & 0.67  & 0.59  & 0.56  & 0.46  & 0.79  & 0.52  & 0.40  & 0.56  & 0.51  & 0.36  & 0.81  & 0.56  & 0.44  & 0.69  & 0.58  & 0.47  \\
          & DIVA  & 0.60  & 0.65  & 0.49  & 0.77  & 0.79  & 0.72  & 0.65  & 0.75  & 0.56  & 0.80  & 0.56  & 0.39  & 0.60  & 0.52  & 0.36  & 0.81  & 0.56  & 0.44  & 0.71  & 0.64  & 0.49  \\
          & GenHancer & 0.76  & 0.73  & 0.77  & 0.70  & 0.70  & 0.60  & 0.62  & 0.72  & 0.65  & 0.81  & 0.55  & 0.41  & 0.59  & 0.49  & 0.34  & 0.64  & 0.27  & 0.15  & 0.69  & 0.58  & 0.49  \\
          & \cellcolor[rgb]{ .851,  .882,  .957}Ours & \cellcolor[rgb]{ .851,  .882,  .957}0.78  & \cellcolor[rgb]{ .851,  .882,  .957}0.79  & \cellcolor[rgb]{ .851,  .882,  .957}0.68  & \cellcolor[rgb]{ .851,  .882,  .957}0.80  & \cellcolor[rgb]{ .851,  .882,  .957}0.82  & \cellcolor[rgb]{ .851,  .882,  .957}0.76  & \cellcolor[rgb]{ .851,  .882,  .957}0.72  & \cellcolor[rgb]{ .851,  .882,  .957}0.77  & \cellcolor[rgb]{ .851,  .882,  .957}0.67  & \cellcolor[rgb]{ .851,  .882,  .957}0.82  & \cellcolor[rgb]{ .851,  .882,  .957}0.57  & \cellcolor[rgb]{ .851,  .882,  .957}0.42  & \cellcolor[rgb]{ .851,  .882,  .957}0.60  & \cellcolor[rgb]{ .851,  .882,  .957}0.54  & \cellcolor[rgb]{ .851,  .882,  .957}0.37  & \cellcolor[rgb]{ .851,  .882,  .957}0.84  & \cellcolor[rgb]{ .851,  .882,  .957}0.69  & \cellcolor[rgb]{ .851,  .882,  .957}0.45  & \cellcolor[rgb]{ .851,  .882,  .957}\textbf{0.76} & \cellcolor[rgb]{ .851,  .882,  .957}\textbf{0.70} & \cellcolor[rgb]{ .851,  .882,  .957}\textbf{0.56} \\
    \midrule
    \multirow{4}[2]{*}{S-1} & Original & 0.68  & 0.74  & 0.56  & 0.94  & 0.97  & 0.95  & 0.64  & 0.69  & 0.55  & 0.89  & 0.61  & 0.43  & 0.66  & 0.55  & 0.40  & 0.85  & 0.63  & 0.52  & 0.78  & 0.70  & 0.57  \\
          & DIVA  & 0.63  & 0.61  & 0.46  & 0.94  & 0.98  & 0.95  & 0.66  & 0.73  & 0.57  & 0.90  & 0.66  & 0.48  & 0.66  & 0.55  & 0.41  & 0.85  & 0.63  & 0.52  & 0.77  & 0.69  & 0.57  \\
          & GenHancer & 0.80  & 0.78  & 0.70  & 0.72  & 0.69  & 0.61  & 0.65  & 0.70  & 0.57  & 0.88  & 0.64  & 0.48  & 0.62  & 0.55  & 0.42  & 0.77  & 0.48  & 0.35  & 0.74  & 0.64  & 0.52  \\
          & \cellcolor[rgb]{ .851,  .882,  .957}Ours & \cellcolor[rgb]{ .851,  .882,  .957}0.85  & \cellcolor[rgb]{ .851,  .882,  .957}0.89  & \cellcolor[rgb]{ .851,  .882,  .957}0.78  & \cellcolor[rgb]{ .851,  .882,  .957}0.95  & \cellcolor[rgb]{ .851,  .882,  .957}0.98  & \cellcolor[rgb]{ .851,  .882,  .957}0.95  & \cellcolor[rgb]{ .851,  .882,  .957}0.76  & \cellcolor[rgb]{ .851,  .882,  .957}0.81  & \cellcolor[rgb]{ .851,  .882,  .957}0.69  & \cellcolor[rgb]{ .851,  .882,  .957}0.89  & \cellcolor[rgb]{ .851,  .882,  .957}0.66  & \cellcolor[rgb]{ .851,  .882,  .957}0.48  & \cellcolor[rgb]{ .851,  .882,  .957}0.67  & \cellcolor[rgb]{ .851,  .882,  .957}0.59  & \cellcolor[rgb]{ .851,  .882,  .957}0.44  & \cellcolor[rgb]{ .851,  .882,  .957}0.88  & \cellcolor[rgb]{ .851,  .882,  .957}0.65  & \cellcolor[rgb]{ .851,  .882,  .957}0.55  & \cellcolor[rgb]{ .851,  .882,  .957}\textbf{0.83} & \cellcolor[rgb]{ .851,  .882,  .957}\textbf{0.76} & \cellcolor[rgb]{ .851,  .882,  .957}\textbf{0.65} \\
    \bottomrule
    \end{tabular}%
   }
   \vspace{-15pt}
\label{tab: discriminative ability results}
\end{table*}
\section{Experiments}
\label{sec: experiments}

In this section, we describe some details of the experiments and present our results. \textbf{Due to space limitations, please refer to the Appendix for an extended version.}

\subsection{Experimental Setups}
\label{subsec: Experimental Setups}

\begin{figure}[t]
  \centering
   \includegraphics[width=\linewidth]{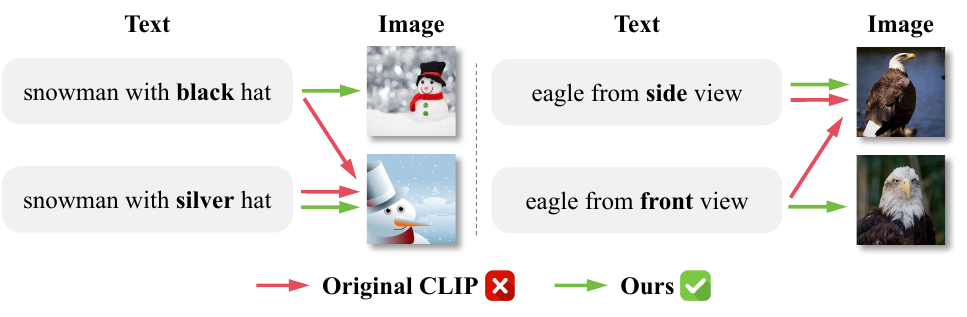}
   \vspace{-20pt}
   \caption{Qualitative results of \textbf{\textcolor{PerColor}{P-Ability}} on the MMVP-VLM benchmark. The predictions from the original CLIP and our improved version are indicated by red and green arrows, respectively. The improved CLIP effectively addresses the original model's limitations in capturing fine-grained visual details.}
   \vspace{-15pt}
   \label{fig: understanding ability qualitative results}
\end{figure}

\noindent \textbf{Implementation Details.} We perform all experiments on an NVIDIA-A100 80GB GPU. In the main experiments, Stable Diffusion v2.1~\cite{rombach2022high} serves as the diffusion backbone, and a simple two-layer MLP is used as the projector. Following \cite{ma2025genhancer}, we discard all patch tokens and retain only the \texttt{[CLS]} token as the conditional input. The training is conducted on the CC3M~\cite{sharma2018conceptual} dataset. We adopt the \textit{Adam with Weight Decay (AdamW)}~\cite{loshchilov2018fixing} optimizer with a weight decay of $0.01$. The initial learning rates are set to $1\times10^{-4}$ and $1\times10^{-5}$ for Stage-1 and Stage-2, respectively. In Stage-2, we fine-tune LoRA~\cite{hulora} with a rank of $16$ applied to the vision encoder. The batch size is fixed at $16$ for all experiments, and the total number of training steps is $4600$.

\vskip 0.5 ex
\noindent \textbf{Competitors.} Our method enhances visual representation through diffusion-based reconstruction. We compare it with three state-of-the-art methods: DIVA~\cite{wangdiffusion}, GenHancer~\cite{ma2025genhancer}, and un$^2$CLIP~\cite{li20252}. Among them, GenHancer and un$^2$CLIP retrain their own generative models, whereas our method is built on top of the pretrained Stable Diffusion.

\vskip 0.5 ex
\noindent \textbf{Evaluation Protocol.} Following \cite{wangdiffusion,ma2025genhancer}, we enhance $6$ CLIP backbones: OpenAI CLIP ViT-L@224/@336~\cite{radford2021learning}, MetaCLIP ViT-L/H@224~\cite{xudemystifying}, and SigLIP ViT-SO@224/@384~\cite{zhai2023sigmoid}. To evaluate the \textbf{\textcolor{PerColor}{P-Ability}}, we use MMVP-VLM~\cite{tong2024eyes}. To evaluate the \textbf{\textcolor{DisColor}{D-Ability}}, we perform zero-shot clustering on $6$ datasets~\cite{lecun2002gradient,krizhevsky2009learning,helber2019eurosat,fei2004learning,cimpoi2014describing,deng2009imagenet} and adopt \textit{Normalized Mutual Information (NMI)}, \textit{Accuracy (ACC)}, and \textit{Adjusted Rand Index (ARI)} to assess performance. We also train MLLMs with our enhanced CLIP encoders using the official LLaVA-1.5~\cite{liu2024improved} recipe. We then evaluate these MLLMs on the same benchmarks~\cite{tong2024eyes,tong2024cambrian,li2024naturalbench,li2023evaluating,lu2022learn,guan2024hallusionbench} as \cite{ma2025genhancer}.

\subsection{Overall Performance}
\label{subsec: Overall Performance}

\begin{figure}
  \centering
  \begin{subfigure}{0.32\linewidth}
    \includegraphics[width=\linewidth]{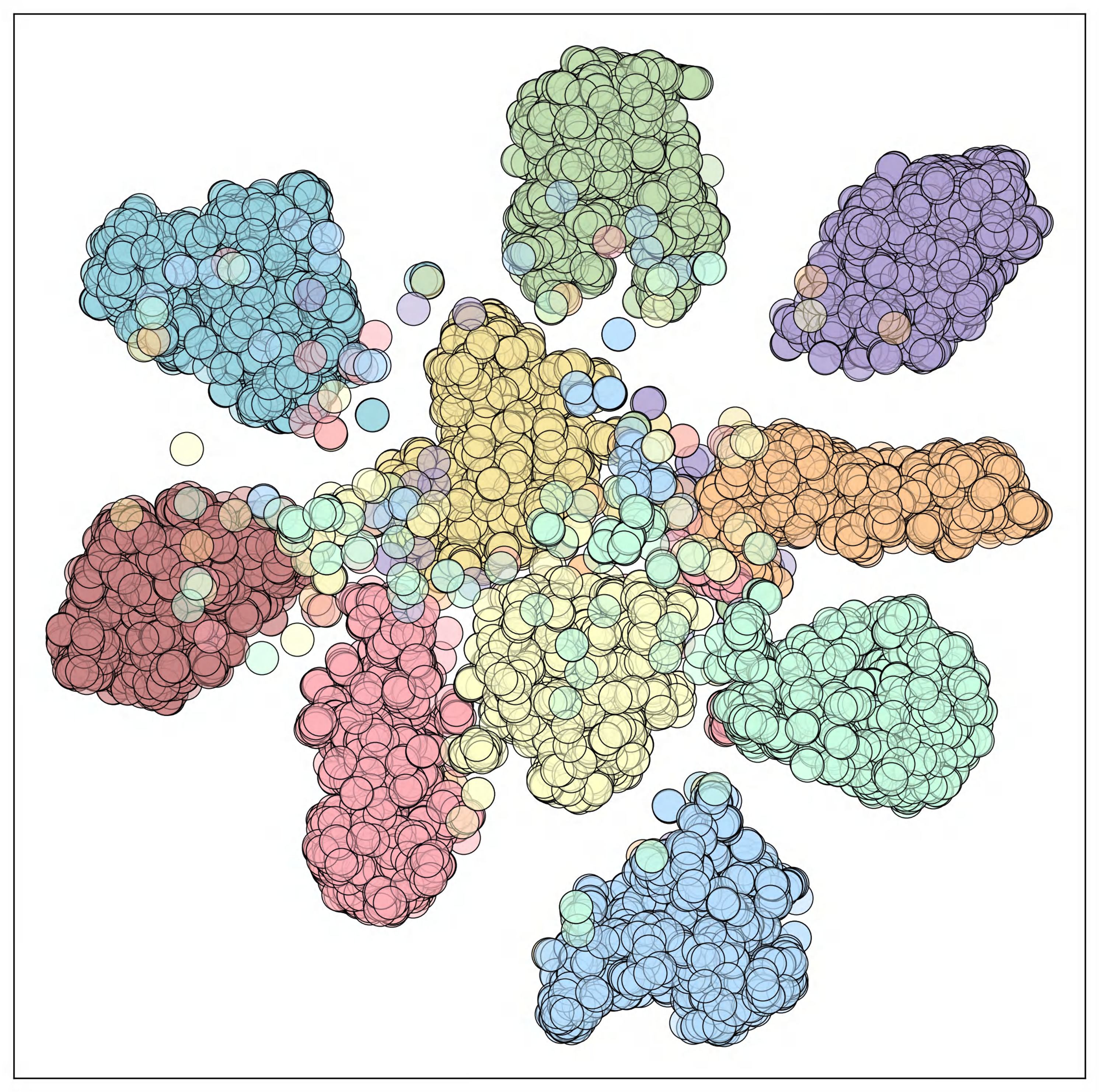}
    \caption{Original CLIP}
  \end{subfigure}
  \hfill
  \begin{subfigure}{0.32\linewidth}
    \includegraphics[width=\linewidth]{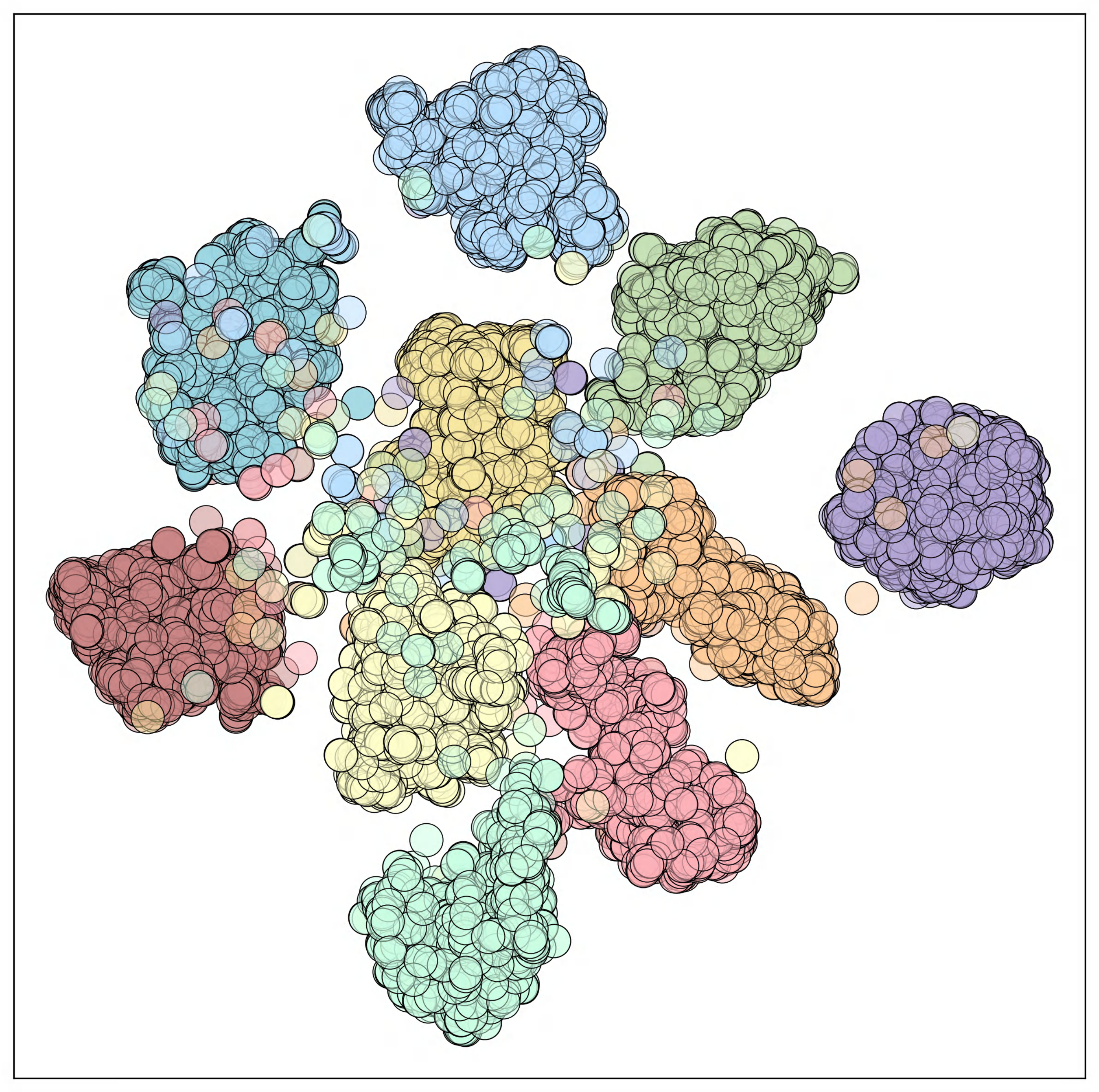}
    \caption{DIVA}
  \end{subfigure}
  \hfill
  \begin{subfigure}{0.32\linewidth}
    \includegraphics[width=\linewidth]{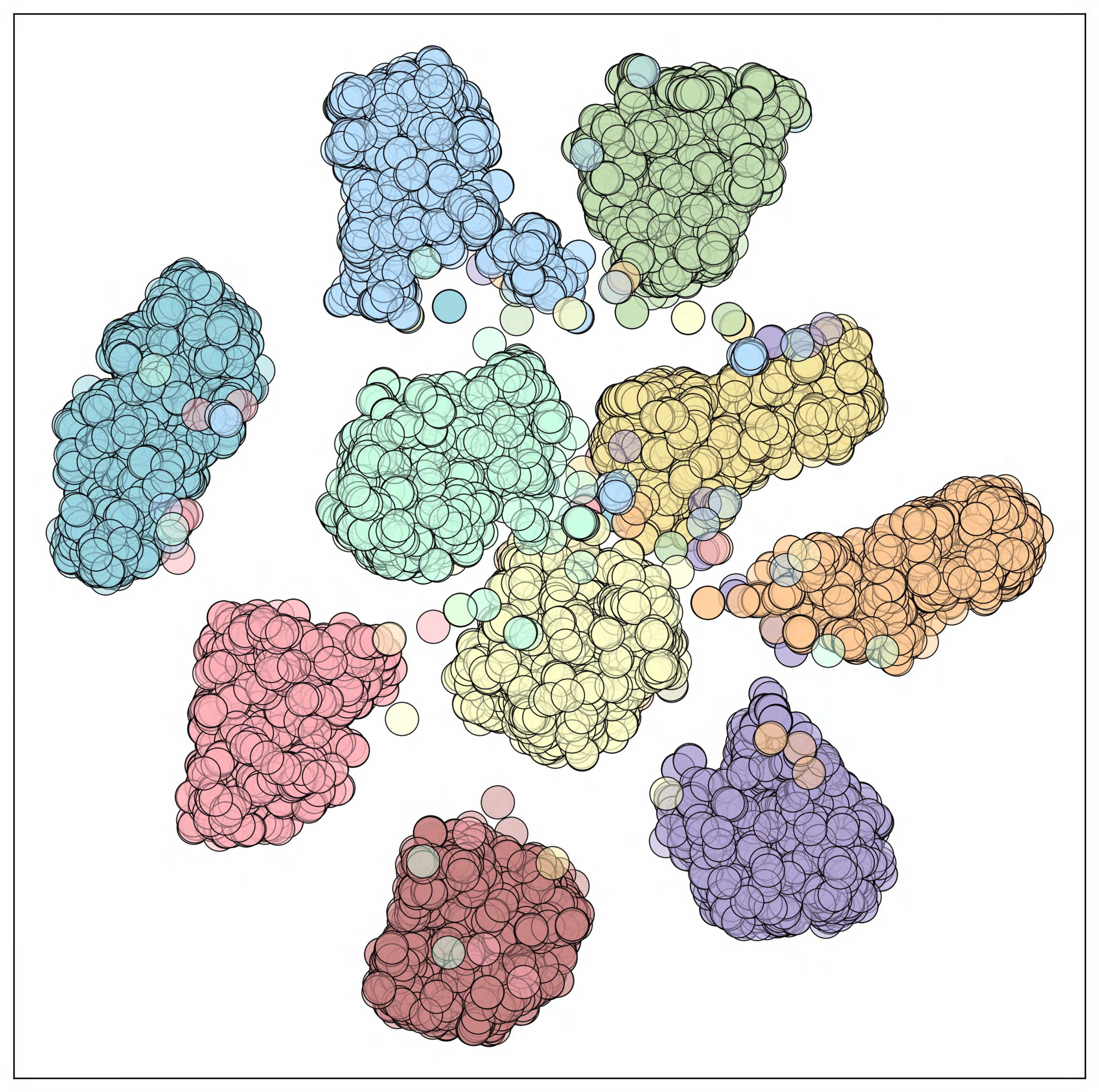}
    \caption{Ours}
  \end{subfigure}
  \vspace{-5pt}
  \caption{Qualitative results of \textbf{\textcolor{DisColor}{D-Ability}} on the MNIST benchmark by using the t-SNE method. The improved CLIP achieves better class separability.}
  \vspace{-15pt}
  \label{fig: discriminative ability qualitative results}
\end{figure}

\begin{table*}[t!]
    \centering
    \renewcommand\arraystretch{0.9}
    \caption{Performance of MLLMs (LLaVA-1.5~\cite{liu2024improved}) on various benchmarks. The champion and the runner-up are highlighted in \textbf{bold} and \underline{underline}. Results on NaturalBench follow the setting in \cite{ma2025genhancer}, which differs from that used in un$^2$CLIP~\cite{li20252}, leading to the missing entries.}
    \resizebox{\linewidth}{!}{
    \begin{tabular}{@{}llccccccccccccc@{}}
    \toprule
    \multirow{3}{*}{LLM} & \multirow{3}{*}{CLIP} & \multicolumn{8}{c}{Vision-Centric Benchmarks} & \multicolumn{5}{c}{Conventional MLLM Benchmarks} \\ \cmidrule(l){3-10} \cmidrule(l){11-15} 
     &  & \multirow{2}{*}{\begin{tabular}[c]{@{}c@{}}MMVP-\\ MLLM~\cite{tong2024eyes}\end{tabular}} & \multicolumn{4}{c}{NaturalBench~\cite{li2024naturalbench}} & \multicolumn{2}{c}{CV-Bench 2D~\cite{tong2024cambrian}} & \multirow{2}{*}{\begin{tabular}[c]{@{}c@{}}CV-Bench\\ 3D~\cite{tong2024cambrian}\end{tabular}} & \multicolumn{3}{c}{POPE~\cite{li2023evaluating}} & \multirow{2}{*}{\begin{tabular}[c]{@{}c@{}}SciQA-\\ IMG~\cite{lu2022learn}\end{tabular}} & \multirow{2}{*}{\begin{tabular}[c]{@{}c@{}}Hallusion\\ Avg.~\cite{guan2024hallusionbench}\end{tabular}} \\ \cmidrule(lr){4-7} \cmidrule(lr){8-9} \cmidrule(lr){11-13}
     &  &  & Acc & Q-Acc & I-Acc & G-Acc & ADE20K & COCO &  & rand & pop & adv &  &  \\ \midrule
    \multirow{5}{*}{Vicuna-7B~~~} & Original & 24.7 & 76.4 & 53.6 & 56.4 & 17.6 & 49.6 & 60.9 & 58.7 & 87.3 & 86.1 & 84.2 & 66.8 & 27.6 \\
     & DIVA & \textbf{31.3} & 75.3 & 51.7 & 56.1 & 22.3 & 51.3 & 63.4 & 60.2 & 87.9 & \underline{87.0} & 84.6 & 66.3 & \textbf{28.6} \\
     & GenHancer & \underline{30.7} & \underline{77.3} & \underline{55.6} & \underline{59.1} & \underline{24.4} & 52.9 & 63.6 & \underline{63.2} & \textbf{88.1} & 86.7 & 84.6 & 66.5 & \underline{28.4} \\ 
     & un$^2$CLIP & \textbf{31.3} & - & - & - & - & \underline{53.9} & \underline{65.1} & 61.2 & \underline{88.0} & \textbf{87.4} & \underline{85.4} & \textbf{68.4} & \underline{28.4} \\ 
     & \cellcolor[rgb]{ .851,  .882,  .957}Ours & \cellcolor[rgb]{ .851,  .882,  .957}\textbf{31.3} & \cellcolor[rgb]{ .851,  .882,  .957}\textbf{78.6} & \cellcolor[rgb]{ .851,  .882,  .957}\textbf{57.8} & \cellcolor[rgb]{ .851,  .882,  .957}\textbf{61.2} & \cellcolor[rgb]{ .851,  .882,  .957}\textbf{25.3} & \cellcolor[rgb]{ .851,  .882,  .957}\textbf{54.1} & \cellcolor[rgb]{ .851,  .882,  .957}\textbf{65.5} & \cellcolor[rgb]{ .851,  .882,  .957}\textbf{63.5} & \cellcolor[rgb]{ .851,  .882,  .957}\underline{88.0} & \cellcolor[rgb]{ .851,  .882,  .957}\textbf{87.4} & \cellcolor[rgb]{ .851,  .882,  .957}\textbf{85.8} & \cellcolor[rgb]{ .851,  .882,  .957}\underline{67.5} & \cellcolor[rgb]{ .851,  .882,  .957}\textbf{28.6} \\ 
     \bottomrule
    \end{tabular}
    }
    \vspace{-10pt}
    \label{tab: mllm results}
\end{table*}

\noindent \textbf{Performance of P-Ability.} Our method is evaluated on the MMVP-VLM benchmark, which comprehensively measures visual detail perception from $9$ fine-grained aspects. As shown in \cref{tab: understanding ability results}, our method achieves consistent improvements across $6$ representative CLIP backbones. Specifically, it surpasses the original model by $14.1\%$ on the OpenAI CLIP ViT-L@224 backbone and $8.9\%$ on the MetaCLIP ViT-L@224 backbone. Our approach achieves the best performance even when compared with GenHancer and un$^2$CLIP, which employ dedicated generative models during training. We further provide qualitative results in \cref{fig: understanding ability qualitative results}, demonstrating that our enhanced CLIP serves as a plug-and-play module to improve the fine-grained perception, such as color and perspective.

\vskip 0.5 ex
\noindent \textbf{Performance of D-Ability.} We assess class separability by zero-shot clustering on $6$ standard datasets. As shown in \cref{tab: discriminative ability results}, our method delivers consistent gains over all backbones and baselines. It achieves the best average NMI/ACC/ARI, with the strongest result on SigLIP ViT-SO@224 (avg. $0.83/0.76/0.65$). The improvements are pronounced on fine-grained or texture-biased sets such as Caltech-101 and DTD, while remaining robust on general-purpose sets like CIFAR-10 and ImageNet-1K. Furthermore, we provide t-SNE visualizations on the MNIST dataset using the OpenAI CLIP ViT-L@224 backbone, as shown in \cref{fig: discriminative ability qualitative results}. The results demonstrate that our enhanced CLIP exhibits stronger class separability.

\begin{table}[t!]
\centering
\renewcommand\arraystretch{1}
\caption{Ablation studies in terms of \textbf{\textcolor{PerColor}{P-Ability}} and \textbf{\textcolor{DisColor}{D-Ability}}.}
\begin{minipage}{0.48\linewidth}
\centering
\subcaption{Naive Method}
\resizebox{\linewidth}{!}{
\begin{tabular}{ccccc}
\toprule
\multirow{2}[4]{*}{\textbf{Method}} & \textbf{\textcolor{PerColor}{MMVP}} & \multicolumn{3}{c}{\textbf{\textcolor{DisColor}{Clustering}}} \\
\cmidrule{2-5}
& ACC & NMI & ACC & ARI \\
\midrule
Naive & 22.96  & 0.72  & 0.65  & 0.50  \\
Ours  & \textbf{33.30} & \textbf{0.76} & \textbf{0.67} & \textbf{0.54} \\
\bottomrule
\end{tabular}}
\label{tab: Naive Method}
\end{minipage}
\hfill
\begin{minipage}{0.48\linewidth}
\centering
\subcaption{Different Training Protocol}
\resizebox{\linewidth}{!}{
\begin{tabular}{ccccc}
\toprule
\multirow{2}[4]{*}{\textbf{Method}} & \textbf{\textcolor{PerColor}{MMVP}} & \multicolumn{3}{c}{\textbf{\textcolor{DisColor}{Clustering}}} \\
\cmidrule{2-5}
& ACC & NMI & ACC & ARI \\
\midrule
End-to-End & 25.93  & 0.73  & 0.65  & 0.51  \\
Two-Stage & \textbf{33.30} & \textbf{0.76} & \textbf{0.67} & \textbf{0.54} \\
\bottomrule
\end{tabular}}
\label{tab: Different Training Protocol}
\end{minipage}
\vspace{-10pt}
\label{tab: Ablation Studies}
\end{table}

\subsection{Multimodal Large Language Model Evaluation}
\label{subsec: Multimodal Large Language Model Evaluation}
We further examine whether the enhanced visual representations learned by our method can be effectively transferred to \textit{Multimodal Large Language Models (MLLMs)}. Specifically, we substitute the original CLIP vision encoder in the LLaVA-1.5~\cite{liu2024improved} framework with our improved CLIP while keeping all training configurations unchanged to ensure a fair comparison. The evaluation is performed on a series of vision-centric and general multimodal benchmarks, covering both vision-centric and general tasks. As shown in \cref{tab: mllm results}, the integration of our enhanced CLIP substantially boosts MLLM performance, particularly in benchmarks that require detailed visual understanding. These results demonstrate that improving the granularity of visual representations not only benefits standalone vision models but also strengthens the visual grounding and reasoning ability of multimodal systems.

\subsection{Ablation Studies}
\label{subsec: Ablation Studies}

\begin{figure}[t]
  \centering
   \includegraphics[width=0.8\linewidth]{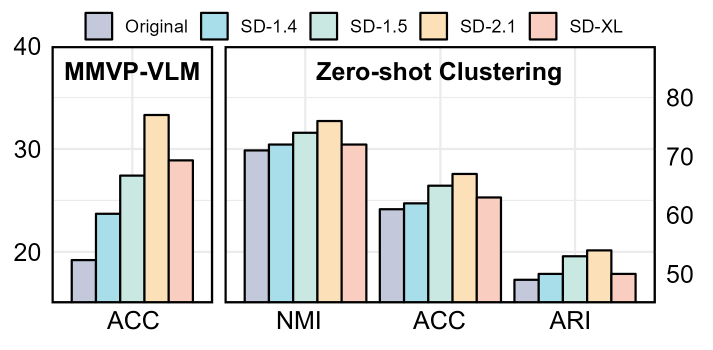}
   \vspace{-10pt}
   \caption{Ablation study on different diffusion model structures.}
   \vspace{-15pt}
   \label{fig: ablation_studies_sds}
\end{figure}

We perform several ablation studies to test the effectiveness of different modules and hyperparameters. All experiments are conducted on the OpenAI CLIP ViT-L@224.

\noindent \textbf{Results on the Naive Method.} \cref{tab: Naive Method} compares our DCR with the naive linear weighting method. The results show that the naive method improves the \textbf{\textcolor{DisColor}{D-Ability}} to some extent, but suffers from poor \textbf{\textcolor{PerColor}{P-Ability}} (only $22.96\%$) due to gradient conflicts. In contrast, our method employs a unified loss function to eliminate these conflicts and achieves better results in both aspects. This demonstrates the effectiveness of our proposed DCR framework.

\vskip 0.5 ex
\noindent \textbf{Results on Different Diffusion Model Structures.} \cref{fig: ablation_studies_sds} presents the performance of our method using different \textit{Stable Diffusion (SD)} backbones, including SD-1.4, SD-1.5, SD-2.1, and SD-XL. We observe a consistent trend that stronger diffusion models generally achieve better results in both \textbf{\textcolor{DisColor}{D-Ability}} and \textbf{\textcolor{PerColor}{P-Ability}} metrics. SD-2.1 attains the best overall performance. It is worth noting that although SD-XL is trained on larger and higher-quality data, it adopts a dual-encoder architecture for conditioning. When replaced with a single image-based embedding as the condition, its potential cannot be fully utilized, leading to a drop in final performance.

\vskip 0.5 ex
\noindent \textbf{Results on Different Training Protocol.} \cref{tab: Different Training Protocol} compares two training protocols: end-to-end and two-stage optimization. The two-stage protocol consistently outperforms the end-to-end scheme across all metrics. This suggests that pretraining the projector helps improve the reconstruction quality of the diffusion model, thereby facilitating better CLIP enhancement.
\section{Conclusion}
\label{sec: conclusion}
This paper presents \textit{Diffusion Contrastive Reconstruction (DCR)}, a general optimization framework that guides diffusion-based reconstruction with contrastive signals to balance discriminative and detail perceptual abilities. DCR effectively removes gradient conflicts and achieves consistent improvements in both aspects. Theoretical analysis and empirical results demonstrate that DCR not only enhances CLIP visual representations but also strengthens the visual reasoning capability of multimodal large language models.

\section*{Acknowledgments}
\label{sec: Acknowledgments}
This work was supported in part by the National Natural Science Foundation of China: 62525212, U23B2051, 62236008, 62441232, 62521007, U21B2038, 62502496 and 62576332, in part by the Youth Innovation Promotion Association CAS, in part by the Strategic Priority Research Program of the Chinese Academy of Sciences, Grant No. XDB0680201, in part by the Beijing Major Science and Technology Project under Contract No. Z251100008125059, in part by Beijing Academy of Artificial Intelligence (BAAI), in part by the China Postdoctoral Science Foundation (CPSF) under Grant No. 2025M771492, in part by the Postdoctoral Fellowship Program of CPSF under Grant No. GZB20240729, in part by the Young Elite Scientists Sponsorship Program of the Beijing High Innovation Plan, and in part by the project of Shandong Provincial Natural Science Foundation ZR2025ZD01.

{
    \small
    \bibliographystyle{ieeenat_fullname}
    \bibliography{main}
}

\newpage
\appendix
\setcounter{page}{1}
\maketitlesupplementary

{
\definecolor{blue}{RGB}{0,20,115}
\section*{\textcolor{blue}{\Large{Contents}}}
\startcontents[appendices]
\printcontents[appendices]{l}{1}{\setcounter{tocdepth}{3}}
}

\section{Symbol Definitions}
\label{sec: supp_Symbol Definitions}

In this section, \Cref{tab: symbols} includes a summary of key notations and descriptions in this work.

\begin{table}[htbp]
    \centering
    \renewcommand{\arraystretch}{1.2}
    \caption{A summary of key notations and descriptions in this work.}
    \resizebox{\linewidth}{!}{
    \begin{tabular}{ll}
        \toprule \textbf{Notations}                                  & \textbf{Descriptions}                                                \\
        \midrule $\x \in \mathbb{R}^{H \times W \times 3}$                                             & Input image.                         \\
        $\mathcal{D} = \{\x_i\}_{i=1}^n$                                                      & Training dataset.                         \\
        $\tilde{\x}$                                     & VAE latent of image $\x$.                  \\
        $t \in \{1,\dots,T\}$                                     & Diffusion timestep.     \\
        $\x_t$                                                 & Noisy latent at diffusion step $t$. \\
        $\epsilon_{\theta}$                                        & Diffusion noise prediction network.                   \\
        $f_{\phi}$                                                      & CLIP visual encoder.              \\
        $\z$                                                      & Visual feature extracted by CLIP encoder, shorthand for $\z = f_{\phi}(\x)$.                                               \\
        $h_{\omega}$                                 & Projection module that maps CLIP features to diffusion condition space.                            \\
        $c$                     & Condition used for diffusion denoising, shorthand for $c = h_{\omega}(\z)$.       \\
        $\hat{\boldsymbol\epsilon}_{\theta}$                                                          & Predicted noise, shorthand for $\hat{\boldsymbol\epsilon}_{\theta} = \epsilon_{\theta}(\tilde{\x}_t, c, t)$.                                                       \\
        $\boldsymbol\epsilon_t^{\text{gt}}$                                                  & Ground-truth diffusion noise at step $t$.                                        \\
        $\hat{\boldsymbol\epsilon}_{+}$/$\hat{\boldsymbol\epsilon}_{-}$                                                          & Predicted noise of positive/negative sample.                                                       \\
        $P$                                                          & Positive set, shorthand for $P=\{\hat{\boldsymbol\epsilon}_{+},\boldsymbol\epsilon_{\text{gt}}\}$.                          \\
        $N$                                                          & Negative set, shorthand for $N=\{\hat{\boldsymbol\epsilon}_{-}^{k}\}_{k=1}^{N-1}$.                               \\
        $C$                                           & Set $C=P\cup N$.                              \\
        $\mathrm{sim}(u,v)$            & Cosine similarity $\mathrm{sim}(u,v)=\frac{\langle u,v\rangle}{|u|,|v|}$.             \\
        \bottomrule
    \end{tabular}%
    } \label{tab: symbols}
\end{table}

\newpage

\section{Algorithm of DCR}
\label{sec: supp_Algorithm of DCR}

For clarity, we provide the pseudocode of our \textit{Diffusion Contrastive Reconstruction (DCR)} training procedure in Algorithm \ref{alg: dcr}.

\begin{algorithm}[htbp]
   \caption{DCR Algorithm.}
   \label{alg: dcr}
   \begin{algorithmic}[1]
   \REQUIRE Dataset $\mathcal{D}=\{\x_i\}$; CLIP vision encoder $f_{\phi}$; projector $h_{\omega}$; frozen diffusion model $\epsilon_{\theta}$; VAE encoder $\mathrm{Enc}$; diffusion steps $T$; batch size $N_b$; temperature $\tau$; Stage-1 epochs $E_1$; Stage-2 epochs $E_2$;
   \ENSURE Enhanced CLIP encoder $f_{\phi}$;
   \STATE \textcolor{orange}{$\rhd$ Stage-1: Projector Alignment}
   \FOR{$epoch = 1$ \textbf{to} $E_1$}
      \STATE Sample mini-batch $\{\x_i\}_{i=1}^{N_b}$ from $\mathcal{D}$;
      \FOR{$i = 1$ \textbf{to} $N_b$}
         \STATE Sample timestep $t \sim \mathrm{Unif}(1,T)$ and noise $\boldsymbol\epsilon_{t}^{\text{gt}} \sim \mathcal{N}(0,I)$;
         \STATE $\tilde{\x}_i \gets \mathrm{Enc}(\x_i)$;
         \STATE $\x_{t} \gets \sqrt{\bar{\alpha}_{t}}\,\tilde{\x}_i + \sqrt{1-\bar{\alpha}_{t}}\,\boldsymbol\epsilon_{t}^{\text{gt}}$;

         \STATE $\hat{\boldsymbol\epsilon}_{i} \gets \epsilon_{\theta}(\x_{t}, h_{\omega}(f_{\phi}(\x_i)), t)$;

         \STATE $\x_i^{+} \gets a(\x_i)$;
         \STATE $\hat{\boldsymbol\epsilon}_{+,i} \gets \epsilon_{\theta}(\x_{t}, h_{\omega}(f_{\phi}(\x_i^{+})), t)$;

         \STATE $N_i \gets \emptyset$;
         \FOR{$j = 1$ \textbf{to} $N_b$, $j \neq i$}
             \STATE $\hat{\boldsymbol\epsilon}_{-,i}^{j} \gets \epsilon_{\theta}(\x_{t}, h_{\omega}(f_{\phi}(\x_j)), t)$;
             \STATE $N_i \gets N_i \cup \{\hat{\boldsymbol\epsilon}_{-,i}^{j}\}$;
         \ENDFOR

         \STATE $P_i \gets \{\hat{\boldsymbol\epsilon}_{+,i},\, \boldsymbol\epsilon_{t}^{\text{gt}}\}$;
         \STATE $C_i \gets P_i \cup N_i$;

         \STATE $\displaystyle
         \mathcal{L}_i \gets 
         -\frac{1}{2} \sum_{p \in P_i}
         \log\frac{\exp(\mathrm{sim}(\hat{\boldsymbol\epsilon}_{i},p)/\tau)}
                   {\sum_{c\in C_i}\exp(\mathrm{sim}(\hat{\boldsymbol\epsilon}_{i},c)/\tau)}$;
      \ENDFOR
      \STATE $\mathcal{L}_{dcr} \gets \frac{1}{N_b}\sum_{i=1}^{N_b}\mathcal{L}_i$;
      \STATE Update projector parameters $\omega$;
   \ENDFOR

   \STATE \textcolor{orange}{$\rhd$ Stage-2: Encoder Enhancement}
   \FOR{$epoch = 1$ \textbf{to} $E_2$}
      \STATE Sample mini-batch $\{\x_i\}_{i=1}^{N_b}$ and repeat the same steps to compute $P_i$, $N_i$, and $\mathcal{L}_i$;
      \STATE $\mathcal{L}_{dcr} \gets \frac{1}{N_b}\sum_{i=1}^{N_b}\mathcal{L}_i$;
      \STATE Update CLIP vision encoder $f_{\phi}$;
   \ENDFOR

   \STATE \textbf{return} $f_{\phi}$;
   \end{algorithmic}
\end{algorithm}

\newpage
\section{Proof of Theorem 1}
\label{sec: supp_Proof of Theorem 1}

\begin{defn}[Bi-Lipschitz Mapping~\cite{david2016bi}]
Let $T:\mathcal{X} \to \mathcal{Y}$ be a mapping between two metric spaces $(\mathcal{X}, \|\cdot\|_{\mathcal{X}})$ and $(\mathcal{Y}, \|\cdot\|_{\mathcal{Y}})$. We say that $T$ is bi-Lipschitz on a subset $\mathcal{M} \subseteq \mathcal{X}$ if there exist constants $0 < m \le L < \infty$ such that, for all $z_1, z_2 \in \mathcal{M}$,
\begin{equation}
m\|z_1 - z_2\|_{\mathcal{X}} \le \bigl\|T(z_1) - T(z_2)\bigr\|_{\mathcal{Y}} \le L \|z_1 - z_2\|_{\mathcal{X}}.
\end{equation}
In particular, a bi-Lipschitz map preserves pairwise distances up to constant factors and is injective on $\mathcal{M}$.
\label{defn: bilip}
\end{defn}

\begin{assmp}
Fix a diffusion step $t$ and define $T(\z)=\epsilon_{\theta}\bigl(x_t,\,h_{\omega}(\z),\,t\bigr)$. According to Definition \ref{defn: bilip}, we assume that $T(\z)$ satisfies the bi-Lipschitz property.
\label{assmp: bilip}
\end{assmp}

\begin{lem}[Equivalent Variance Identity~\cite{zhang2012some}]
Let $e_1,\dots,e_n \in \mathbb{R}^d$ be arbitrary vectors and denote their mean by $\bar{e} = \frac{1}{n}\sum_{i=1}^n e_i$. Then the empirical variance can be written as the average of all pairwise squared distances:
\begin{equation}
\sum_{i=1}^n \|e_i - \bar{e}\|_2^2 = \frac{1}{2n}\sum_{i=1}^n \sum_{j=1}^n \|e_i - e_j\|_2^2.
\label{eq: var_pair}
\end{equation}
\label{lem: Equivalent Variance Identity}
\end{lem}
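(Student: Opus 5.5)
The statement to prove is Lemma~\ref{lem: Equivalent Variance Identity}, the pairwise form of the empirical variance. The plan is to expand the double sum on the right-hand side around the mean $\bar e$ and show that the cross terms vanish. I would write $e_i - e_j = (e_i - \bar e) - (e_j - \bar e)$ and set $u_i = e_i - \bar e$. The only structural fact needed is $\sum_{i=1}^n u_i = \sum_i e_i - n\bar e = 0$, which follows directly from the definition of $\bar e$.

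Expanding the square gives $\|e_i - e_j\|_2^2 = \|u_i\|_2^2 + \|u_j\|_2^2 - 2\langle u_i, u_j\rangle$. Summing over all pairs $(i,j)$ then yields three pieces.
\begin{itemize}
\item The first piece is $\sum_{i,j}\|u_i\|_2^2 = n\sum_i \|u_i\|_2^2$, since the summand does not depend on $j$.
\item The second piece is identical by symmetry, giving another $n\sum_i\|u_i\|_2^2$.
\item The cross piece is $-2\langle \sum_i u_i, \sum_j u_j\rangle = 0$.
\end{itemize}
Hence
\begin{equation}
\sum_{i=1}^n\sum_{j=1}^n \|e_i - e_j\|_2^2 = 2n\sum_{i=1}^n \|e_i - \bar e\|_2^2 .
\end{equation}
Dividing both sides by $2n$ gives exactly \cref{eq: var_pair}.

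There is no real obstacle. The one step that needs care is the cross term: bilinearity of the inner product lets the double sum factor as $\langle \sum_i u_i, \sum_j u_j\rangle$, and the centering then makes it vanish. The diagonal terms $i=j$ contribute zero on the right, so counting ordered pairs including the diagonal matches the normalization $\frac{1}{2n}$. No results from earlier in the paper are needed beyond linearity of the mean.
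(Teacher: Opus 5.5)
Your proof is correct and is essentially the paper's argument: both expand $\|e_i-e_j\|_2^2$ by bilinearity of the inner product and use $\sum_i e_i = n\bar e$ to handle the cross term. The only difference is that you center first, so that $e_i - e_j = u_i - u_j$ with $\sum_i u_i = 0$ and the cross term vanishes outright; the paper instead expands both sides in the raw vectors and shows each reduces to $\sum_i \|e_i\|_2^2 - n\|\bar e\|_2^2$ once the double sum is divided by $2n$, so your route simply avoids expanding the left-hand side separately.
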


\begin{proof}
\textbf{For the left-hand side}, using $\bar{e} = \frac{1}{n}\sum_{k=1}^n e_k$, we have:
\begin{equation}
\begin{aligned}
\sum_{i=1}^n \|e_i - \bar{e}\|_2^2
&= \sum_{i=1}^n \langle e_i - \bar{e}, e_i - \bar{e} \rangle \\
&= \sum_{i=1}^n \bigl(\|e_i\|_2^2 + \|\bar{e}\|_2^2 - 2\langle e_i, \bar{e} \rangle \bigr) \\
&= \sum_{i=1}^n \|e_i\|_2^2 + n\|\bar{e}\|_2^2 - 2\left\langle \sum_{i=1}^n e_i, \bar{e} \right\rangle.
\end{aligned}
\end{equation}

Since $\sum_{i=1}^n e_i = n\bar{e}$, it follows that
\begin{equation}
\left\langle \sum_{i=1}^n e_i, \bar{e} \right\rangle = \langle n\bar{e}, \bar{e} \rangle = n\|\bar{e}\|_2^2.
\end{equation}

Therefore,
\begin{equation}
\sum_{i=1}^n \|e_i - \bar{e}\|_2^2 = \sum_{i=1}^n \|e_i\|_2^2 - n\|\bar{e}\|_2^2.
\label{eq: var_left}
\end{equation}

\textbf{For the right-hand side}, using $\|e_i - e_j\|_2^2 = \|e_i\|_2^2 + \|e_j\|_2^2 - 2\langle e_i, e_j\rangle$, we obtain
\begin{equation}
\begin{aligned}
&\sum_{i=1}^n \sum_{j=1}^n \|e_i - e_j\|_2^2 \\
=& \sum_{i,j} \|e_i\|_2^2 + \sum_{i,j} \|e_j\|_2^2 - 2\sum_{i,j} \langle e_i, e_j\rangle \\
=& n \sum_{i=1}^n \|e_i\|_2^2 + n \sum_{j=1}^n \|e_j\|_2^2 - 2 \left\langle \sum_{i=1}^n e_i,\ \sum_{j=1}^n e_j \right\rangle \\
=& 2n \sum_{i=1}^n \|e_i\|_2^2 - 2 \left\|\sum_{i=1}^n e_i\right\|_2^2.
\end{aligned}
\end{equation}

Again using $\sum_{i=1}^n e_i = n\bar{e}$, we obtain
\begin{equation}
\left\|\sum_{i=1}^n e_i\right\|_2^2 = \|n\bar{e}\|_2^2 = n^2\|\bar{e}\|_2^2,
\end{equation}
so
\begin{equation}
\sum_{i=1}^n \sum_{j=1}^n \|e_i - e_j\|_2^2 = 2n \sum_{i=1}^n \|e_i\|_2^2 - 2n^2\|\bar{e}\|_2^2.
\label{eq: var_right}
\end{equation}

Dividing \cref{eq: var_right} by $2n$ yields
\begin{equation}
\frac{1}{2n} \sum_{i=1}^n \sum_{j=1}^n \|e_i - e_j\|_2^2
= \sum_{i=1}^n \|e_i\|_2^2 - n\|\bar{e}\|_2^2.
\label{eq: var_right_2n}
\end{equation}

Therefore, by combining \cref{eq: var_left} and \cref{eq: var_right_2n}, we can obtain:
\begin{equation}
\sum_{i=1}^n \|e_i - \bar{e}\|_2^2 = \frac{1}{2n}\sum_{i=1}^n \sum_{j=1}^n \|e_i - e_j\|_2^2.
\end{equation}

This completed the proof.
\end{proof}

\begin{rthm1}
Fix a diffusion step $t$ and define $T(\z)=\epsilon_{\theta}\bigl(x_t,\,h_{\omega}(\z),\,t\bigr)$. Under a mild assumption, the intra-class scatter and inter-class scatter in the feature space ($S_{\text{inner}}, S_{\text{inter}}$) can be bounded by those in the noise space ($S_{\text{inner}}^{(\epsilon)}(t), S_{\text{inter}}^{(\epsilon)}(t)$) as
\begin{equation*}
S_{\text{inner}}\le\frac{1}{m^2}\,S_{\text{inner}}^{(\epsilon)}(t),
\end{equation*}
\begin{equation*}
S_{\text{inter}}\ge\kappa S_{\text{inter}}^{(\epsilon)}(t)-\eta S_{\text{inner}}^{(\epsilon)}(t),
\end{equation*}
where $m$, $\kappa$, and $\eta$ are positive constants depending on the Lipschitz continuity of $T(\cdot)$.
\end{rthm1}

\begin{proof}
We proceed in three steps. Throughout, fix a diffusion step $t$ and a mini-batch with class set $\mathcal Y$.

For class $y\in\mathcal Y$, let the feature set be $\mathcal Z_y=\{\z_i\}_{i=1}^{n_y}$ with $\z_i=f_{\phi}(\x_i)$ and the noise set be $\mathcal E_{y,t}=\{\hat{\boldsymbol\epsilon}_i\}_{i=1}^{n_y}$ with $\hat{\boldsymbol\epsilon}_i=\epsilon_{\theta}(x_t,h_{\omega}(\z_i),t)$. Denote the class means by $\mu_y=\frac{1}{n_y}\sum_{i=1}^{n_y}\z_i$ and $\nu_{y,t}=\frac{1}{n_y}\sum_{i=1}^{n_y}\hat{\boldsymbol\epsilon}_i$. As shown in Eq. (4) and Eq. (5) in the main text, we define the intra-/inter-class scatters in the CLIP feature space by
\begin{equation}
S_{\text{inner}}=\frac{1}{|\mathcal Y|}\sum_{y\in\mathcal Y}\frac{1}{n_y}\sum_{i=1}^{n_y}\|\z_i-\mu_y\|_2^2,
\label{eq: feature_inner}
\end{equation}
\begin{equation}
S_{\text{inter}}=\frac{1}{|\mathcal Y|(|\mathcal Y|-1)}\sum_{\substack{y,y'\in\mathcal Y\\y\ne y'}}\|\mu_y-\mu_{y'}\|_2^2.
\label{eq: feature_inter}
\end{equation}
Similarly, we define the intra-/inter-class scatters in the diffusion noise space by
\begin{equation}
S_{\text{inner}}^{(\epsilon)}(t)=\frac{1}{|\mathcal Y|}\sum_{y\in\mathcal Y}\frac{1}{n_y}\sum_{i=1}^{n_y}\|\hat{\boldsymbol\epsilon}_i-\nu_{y,t}\|_2^2,
\label{eq: noise_inner}
\end{equation}
\begin{equation}
S_{\text{inter}}^{(\epsilon)}(t)=\frac{1}{|\mathcal Y|(|\mathcal Y|-1)}\sum_{\substack{y,y'\in\mathcal Y\\y\ne y'}}\|\nu_{y,t}-\nu_{y',t}\|_2^2.
\label{eq: noise_inter}
\end{equation}

\vskip 0.5 ex
\noindent \textbf{Step 1: Minimizing $\mathcal L_{\mathrm{DCR}}$ improves noise-space geometry.}

Recall the DCR loss
\begin{equation*}
\mathcal{L}_{\mathrm{DCR}}=-\frac{1}{2}\sum_{p\in P} \log \frac{d(\hat{\boldsymbol\epsilon},p)}{\sum_{c\in C}d(\hat{\boldsymbol\epsilon},c)},
\end{equation*}
where $\hat{\boldsymbol\epsilon}=\epsilon_{\theta}(x_t,\,h_{\omega}(\z),\,t)$ is the anchor, $\hat{\boldsymbol\epsilon}_{+}$ is the positive sample, $\boldsymbol\epsilon_{\mathrm{gt}}$ is the ground-truth, and $N=\{\hat{\boldsymbol\epsilon}_{-}^{(j)}\}$ are negatives from other images in the mini-batch. $P=\{\hat{\boldsymbol\epsilon}_{+},\,\boldsymbol\epsilon_{t}^{\mathrm{gt}}\}$, $C=P\cup N$. $d(u,v)=\exp(\mathrm{sim}(u,v)/\tau)$ with $\mathrm{sim}$ the cosine similarity and $\tau>0$. Let
\begin{equation}
p(q\mid \hat{\boldsymbol\epsilon})=\frac{\exp(\mathrm{sim}(\hat{\boldsymbol\epsilon},q)/\tau)}{\sum_{c\in C}\exp(\mathrm{sim}(\hat{\boldsymbol\epsilon},c)/\tau)}.
\end{equation}
Differentiating the loss with respect to the similarity term gives the following gradient expression:
\begin{equation}
\frac{\partial \mathcal L_{\mathrm{DCR}}}{\partial\,\mathrm{sim}(\hat{\boldsymbol\epsilon},q)}
=\begin{cases}
-\tfrac{1}{2\tau}\bigl(1-2p(q\mid\hat{\boldsymbol\epsilon})\bigr), & q\in P,\\
\tfrac{1}{\tau}\,p(q\mid\hat{\boldsymbol\epsilon}), & q\in N.
\end{cases}
\end{equation}

Thus, gradient descent on $\mathcal L_{\mathrm{DCR}}$ \textbf{increases} the anchor's cosine similarity with the two positives ($\hat{\boldsymbol\epsilon}_{+}$ and $\boldsymbol\epsilon_{\mathrm{gt}}$) and \textbf{decreases} it with all negatives. Since
\begin{equation}
\mathrm{sim}(u,v)=\langle \bar u,\bar v\rangle,\quad \text{where}\ \bar u=\frac{u}{\|u\|},\ \bar v=\frac{v}{\|v\|},
\end{equation}
we have
\begin{equation}
\|\bar u-\bar v\|_2^2 = \|\bar u\|_2^2+\|\bar v\|_2^2-2\langle \bar u,\bar v\rangle = 2(1-\mathrm{sim}(\bar u,\bar v)).
\end{equation}

Therefore, minimizing $\mathcal L_{\mathrm{DCR}}$ is equivalent to
\begin{equation}
\min_{\phi,\omega}\bigl\|\hat{\boldsymbol\epsilon}-\hat{\boldsymbol\epsilon}_{+}\bigr\|_2^2+\bigl\|\hat{\boldsymbol\epsilon}-\boldsymbol\epsilon_{\mathrm{gt}}\bigr\|_2^2,
\label{eq: min_pair}
\end{equation}
\begin{equation}
\max_{\phi,\omega}\bigl\|\hat{\boldsymbol\epsilon}-\hat{\boldsymbol\epsilon}_{-}^{(j)}\bigr\|_2^2,\ \forall j.
\end{equation}

Then, we pass from pairwise distances to scatter. Specifically, for each class $y\in\mathcal Y$, the key component in \cref{eq: noise_inner},
\begin{equation}
\frac{1}{n_y}\sum_{i=1}^{n_y}\|\hat{\boldsymbol\epsilon}_i-\nu_{y,t}\|_2^2
\end{equation}
is, by Lemma~\ref{lem: Equivalent Variance Identity}, exactly the average of all pairwise squared distances $\|\hat{\boldsymbol\epsilon}_i-\hat{\boldsymbol\epsilon}_j\|_2^2$ within that class (up to the constant factor $1/(2n_y^2)$). Minimizing the pairwise distances between the anchor and its positive/ground-truth terms in \cref{eq: min_pair} therefore decreases the average intra-class pairwise distance, and thus decreases $\frac{1}{n_y}\sum_{i=1}^{n_y}\|\hat{\boldsymbol\epsilon}_i-\nu_{y,t}\|_2^2$ for each $y$. Averaging over all classes yields
\begin{equation}
\min_{\phi,\omega}\;S_{\text{inner}}^{(\epsilon)}(t).
\label{eq: noise_inner_decrease}
\end{equation}

In contrast, the negative terms $\{\hat{\boldsymbol\epsilon}_{-}^{(j)}\}$ are drawn from other images in the mini-batch, which typically belong to different classes. Maximizing the anchor--negative distances $\|\hat{\boldsymbol\epsilon}-\hat{\boldsymbol\epsilon}_{-}^{(j)}\|_2^2$ increases the average pairwise distances between samples of different classes. This increase, when propagated to the distances between the corresponding class means $\{\nu_{y,t}\}_{y\in\mathcal Y}$, leads to an increase of the inter-class scatter $S_{\text{inter}}^{(\epsilon)}(t)$:
\begin{equation}
\max_{\phi,\omega}\;S_{\text{inter}}^{(\epsilon)}(t).
\label{eq: noise_inter_increase}
\end{equation}

Combining \cref{eq: noise_inner_decrease} and \cref{eq: noise_inter_increase}, we conclude that minimizing $\mathcal L_{\mathrm{DCR}}$ improves the geometry in the noise space by decreasing intra-class scatter and increasing inter-class scatter.

\vskip 0.5 ex
\noindent \textbf{Step 2: Bounding feature-space intra-class scatter.}

Fix a class $y\in\mathcal Y$, and write its feature set as $\mathcal Z_y=\{\z_i\}_{i=1}^{n_y}$ and its noise set as $\mathcal E_{y,t}=\{\hat{\boldsymbol\epsilon}_i\}_{i=1}^{n_y}$ with $\hat{\boldsymbol\epsilon}_i = T(\z_i)$.

By Lemma~\ref{lem: Equivalent Variance Identity}, the intra-class variance in the feature space can be written as
\begin{equation}
\frac{1}{n_y}\sum_{i=1}^{n_y}\|\z_i-\mu_y\|_2^2 = \frac{1}{2n_y^2}\sum_{i=1}^{n_y}\sum_{j=1}^{n_y}\|\z_i-\z_j\|_2^2.
\label{eq: feature_var_pair}
\end{equation}
Similarly, the intra-class variance in the noise space is
\begin{equation}
\frac{1}{n_y}\sum_{i=1}^{n_y}\|\hat{\boldsymbol\epsilon}_i-\nu_{y,t}\|_2^2
= \frac{1}{2n_y^2}\sum_{i=1}^{n_y}\sum_{j=1}^{n_y}
\|\hat{\boldsymbol\epsilon}_i-\hat{\boldsymbol\epsilon}_j\|_2^2.
\label{eq: noise_var_pair}
\end{equation}

By Assumption~\ref{assmp: bilip}, there exists $m>0$ such that, for all $\z_1,\z_2\in\mathcal M$,
\begin{equation}
m\|\z_1-\z_2\| \le \|T(\z_1)-T(\z_2)\|.
\end{equation}

Equivalently,
\begin{equation}
\|\z_1-\z_2\| \le \frac{1}{m}\|T(\z_1)-T(\z_2)\|.
\end{equation}

Applying this inequality to each pair $(\z_i,\z_j)$ gives
\begin{equation}
\|\z_i-\z_j\|^2 \le \frac{1}{m^2}\|T(\z_i)-T(\z_j)\|^2 = \frac{1}{m^2}\|\hat{\boldsymbol\epsilon}_i-\hat{\boldsymbol\epsilon}_j\|^2.
\label{eq: feature_noise}
\end{equation}

Substituting \cref{eq: feature_noise} into \cref{eq: feature_var_pair} and comparing with \cref{eq: noise_var_pair}, we obtain
\begin{equation}
\begin{aligned}
\frac{1}{n_y}\sum_{i=1}^{n_y}\|\z_i-\mu_y\|_2^2 &\le \frac{1}{2n_y^2}\sum_{i,j=1}^{n_y} \frac{1}{m^2}\|\hat{\boldsymbol\epsilon}_i-\hat{\boldsymbol\epsilon}_j\|_2^2 \\
&= \frac{1}{m^2}\cdot\frac{1}{n_y}\sum_{i=1}^{n_y}\|\hat{\boldsymbol\epsilon}_i-\nu_{y,t}\|_2^2.
\end{aligned}
\end{equation}

Averaging over all classes $y\in\mathcal Y$ yields
\begin{equation}
\begin{aligned}
S_{\text{inner}} &= \frac{1}{|\mathcal Y|}\sum_{y\in\mathcal Y} \frac{1}{n_y}\sum_{i=1}^{n_y}\|\z_i-\mu_y\|_2^2 \\
&\le \frac{1}{m^2}\cdot \frac{1}{|\mathcal Y|}\sum_{y\in\mathcal Y} \frac{1}{n_y}\sum_{i=1}^{n_y}\|\hat{\boldsymbol\epsilon}_i-\nu_{y,t}\|_2^2 \\
&= \frac{1}{m^2} S_{\text{inner}}^{(\epsilon)}(t),
\end{aligned}
\end{equation}
which proves the first inequality in Theorem 1.

\vskip 0.5 ex
\noindent \textbf{Step 3: Bounding feature-space inter-class scatter.}

By Assumption~\ref{assmp: bilip}, there exists $L>0$ such that, for all $\z_1,\z_2\in\mathcal M$,
\begin{equation}
\|T(\z_1)-T(\z_2)\| \le L\|\z_1-\z_2\|.
\end{equation}
Equivalently,
\begin{equation}
\|\z_1-\z_2\| \ge \frac{1}{L}\|T(\z_1)-T(\z_2)\|.
\label{eq: lower_bound_T}
\end{equation}

Fix two distinct classes $y,y'\in\mathcal Y$. Applying \cref{eq: lower_bound_T} with $\z_1=\mu_y$, $\z_2=\mu_{y'}$ yields
\begin{equation}
\|\mu_y-\mu_{y'}\| \ge \frac{1}{L}\|T(\mu_y)-T(\mu_{y'})\|.
\label{eq: mu_lower_bound}
\end{equation}

We next relate $T(\mu_y)$ to $\nu_{y,t}$. We have
\begin{equation}
\nu_{y,t} = \frac{1}{n_y}\sum_{i=1}^{n_y}\hat{\boldsymbol\epsilon}_i = \frac{1}{n_y}\sum_{i=1}^{n_y} T(\z_i).
\end{equation}
Then
\begin{equation}
\begin{aligned}
\|T(\mu_y)-\nu_{y,t}\|_2
&= \left\|T(\mu_y)-\frac{1}{n_y}\sum_{i=1}^{n_y}T(\z_i)\right\|_2 \\
&= \left\|\frac{1}{n_y}\sum_{i=1}^{n_y}\bigl(T(\mu_y)-T(\z_i)\bigr)\right\|_2 \\
&\le \frac{1}{n_y}\sum_{i=1}^{n_y}\|T(\mu_y)-T(\z_i)\|_2 \\
&\le \frac{L}{n_y}\sum_{i=1}^{n_y}\|\mu_y-\z_i\|_2,
\end{aligned}
\end{equation}
where we used the triangle inequality and then the Lipschitz property of $T$. By the Cauchy-Schwarz inequality,
\begin{equation}
\sum_{i=1}^{n_y}\|\mu_y-\z_i\|_2 \le \sqrt{n_y}\left(\sum_{i=1}^{n_y}\|\mu_y-\z_i\|_2^2\right)^{1/2}.
\end{equation}
Hence,
\begin{equation}
\|T(\mu_y)-\nu_{y,t}\|_2 \le \frac{L}{\sqrt{n_y}} \left(\sum_{i=1}^{n_y}\|\mu_y-\z_i\|_2^2\right)^{1/2}.
\label{eq: Tmu_nu_bound}
\end{equation}

 Similarly, for class $y'$, we have:
\begin{equation}
\|T(\mu_{y'})-\nu_{y',t}\|_2
\le \frac{L}{\sqrt{n_{y'}}}
\left(\sum_{i=1}^{n_{y'}}\|\mu_{y'}-\z_i\|_2^2\right)^{1/2}.
\label{eq: Tmuprime_nuprime_bound}
\end{equation}

Using the triangle inequality, we get
\begin{equation}
\begin{aligned}
&\|T(\mu_y)-T(\mu_{y'})\|_2 \\
\ge& \|\nu_{y,t}-\nu_{y',t}\|_2
   - \|T(\mu_y)-\nu_{y,t}\|_2 \\
   &\ \ \ \ \ \ \ \ \ \ \ \ \ \ \ \ \ \ \ \ \ \ - \|T(\mu_{y'})-\nu_{y',t}\|_2.
\end{aligned}
\end{equation}

Combining this with \cref{eq: Tmu_nu_bound,eq: Tmuprime_nuprime_bound}, we obtain
\begin{equation}
\begin{aligned}
&\|T(\mu_y)-T(\mu_{y'})\|_2 \\
\ge& \|\nu_{y,t}-\nu_{y',t}\|_2
 - \frac{L}{\sqrt{n_y}}
   \left(\sum_{i=1}^{n_y}\|\mu_y-\z_i\|_2^2\right)^{1/2} \\
 &\ \ \ \ \ \ \ \ \ \ \ \ \ \ \ \ \ \ \ \ \ - \frac{L}{\sqrt{n_{y'}}}
   \left(\sum_{i=1}^{n_{y'}}\|\mu_{y'}-\z_i\|_2^2\right)^{1/2}.
\end{aligned}
\label{eq: Tmu_diff_bound}
\end{equation}

Substituting \cref{eq: Tmu_diff_bound} into \cref{eq: mu_lower_bound}, we obtain
\begin{equation}
\begin{aligned}
\|\mu_y-\mu_{y'}\|_2
&\ge \frac{1}{L}\|\nu_{y,t}-\nu_{y',t}\|_2 \\
&\quad - \frac{1}{\sqrt{n_y}}
   \left(\sum_{i=1}^{n_y}\|\mu_y-\z_i\|_2^2\right)^{1/2}\\
&\quad - \frac{1}{\sqrt{n_{y'}}}
   \left(\sum_{i=1}^{n_{y'}}\|\mu_{y'}-\z_i\|_2^2\right)^{1/2}.
\end{aligned}
\end{equation}

Squaring both sides and using the inequality $(a-b-c)^2 \ge \tfrac{1}{2}a^2 - 2(b^2+c^2)$ for any $a,b,c\in\mathbb R$, we obtain
\begin{equation}
\begin{aligned}
\|\mu_y-\mu_{y'}\|_2^2
&\ge \frac{1}{2L^2}\|\nu_{y,t}-\nu_{y',t}\|_2^2 \\
&\quad - 2\left(\frac{1}{n_y}\sum_{i=1}^{n_y}\|\mu_y-\z_i\|_2^2\right) \\
&\quad       - 2\left(\frac{1}{n_{y'}}\sum_{i=1}^{n_{y'}}\|\mu_{y'}-\z_i\|_2^2\right).
\end{aligned}
\end{equation}

Averaging the above inequality over all ordered pairs $(y,y')$ with $y\neq y'$,
and recalling the definition of $S_{\text{inter}}$ and $S_{\text{inter}}^{(\epsilon)}(t)$, we obtain
\begin{equation}
\begin{aligned}
S_{\text{inter}}
&= \frac{1}{|\mathcal Y|(|\mathcal Y|-1)}
   \sum_{\substack{y,y'\in\mathcal Y\\y\ne y'}}
   \|\mu_y-\mu_{y'}\|_2^2 \\
&\ge \frac{1}{|\mathcal Y|(|\mathcal Y|-1)}
   \sum_{\substack{y,y'\in\mathcal Y\\y\ne y'}}
   \left[
      \frac{1}{2L^2}\|\nu_{y,t}-\nu_{y',t}\|_2^2 
      - 2 B_{y,y'}
   \right] \\
&= \frac{1}{2L^2} S_{\text{inter}}^{(\epsilon)}(t)
   - \frac{2}{|\mathcal Y|(|\mathcal Y|-1)}
     \sum_{\substack{y,y'\in\mathcal Y\\y\ne y'}}
     B_{y,y'},
\end{aligned}
\end{equation}
where $B_{y,y'} = \frac{1}{n_y}\sum_{i\in I_y}\|\mu_y-\z_i\|_2^2 + \frac{1}{n_{y'}}\sum_{i\in I_{y'}}\|\mu_{y'}-\z_i\|_2^2$.

For the second term, note that for each fixed $y$, the quantity $A_y = \frac{1}{n_y}\sum_{i\in I_y}\|\mu_y-\z_i\|_2^2$ appears exactly $(|\mathcal Y|-1)$ times as the first index and $(|\mathcal Y|-1)$ times as the second index when summing over all ordered pairs $(y,y')$ with $y\neq y'$. Hence
\begin{equation}
\sum_{\substack{y,y'\in\mathcal Y\\y\ne y'}} 
\left(A_y + A_{y'}\right)
= 2(|\mathcal Y|-1)\sum_{y\in\mathcal Y} A_y.
\end{equation}

Therefore,
\begin{equation}
\begin{aligned}
&\frac{2}{|\mathcal Y|(|\mathcal Y|-1)}
\sum_{\substack{y,y'\in\mathcal Y\\y\ne y'}}
\left(A_y + A_{y'}\right) \\
=& \frac{2}{|\mathcal Y|(|\mathcal Y|-1)}\,
   2(|\mathcal Y|-1)\sum_{y\in\mathcal Y} A_y \\
=& \frac{4}{|\mathcal Y|}\sum_{y\in\mathcal Y} A_y \\
 =& 4 S_{\text{inner}}.
\end{aligned}
\end{equation}

Substituting this back, we obtain the explicit bound
\begin{equation}
S_{\text{inter}}
\;\ge\; \frac{1}{2L^2} S_{\text{inter}}^{(\epsilon)}(t)
      \;-\; 4 S_{\text{inner}}.
\label{eq:Sinter_feature_noise}
\end{equation}

Finally, using the bound from Step 2, $S_{\text{inner}} \;\le\; \frac{1}{m^2} S_{\text{inner}}^{(\epsilon)}(t)$, we arrive at
\begin{equation}
S_{\text{inter}} \ge \frac{1}{2L^2} S_{\text{inter}}^{(\epsilon)}(t) - \frac{4}{m^2} S_{\text{inner}}^{(\epsilon)}(t).
\end{equation}
Thus, we have:
\begin{equation}
S_{\text{inter}}\ge\kappa S_{\text{inter}}^{(\epsilon)}(t)-\eta S_{\text{inner}}^{(\epsilon)}(t),
\end{equation}
where $\kappa = \frac{1}{2L^2}$ and $\eta = \frac{4}{m^2}$. It proves the second inequality in Theorem 1.

This completed the proof.
\end{proof}

\section{Proof of Theorem 2}
\label{sec: supp_Proof of Theorem 2}

\begin{rthm2}
If negatives in the mini-batch are well separated from the anchor (\textit{i.e.}, $\mathrm{sim}(\hat{\boldsymbol\epsilon},\hat{\boldsymbol\epsilon}^{j}_{-})\ll \mathrm{sim}(\hat{\boldsymbol\epsilon},\boldsymbol\epsilon_{\text{gt}})$ for all $j\neq i$), and predicted-noise norms are bounded away from $0$ and $\infty$, then the DCR loss reduces to a scaled reconstruction loss up to an additive constant:
\begin{equation*}
\mathcal{L}_{\mathrm{dcr}}=\lambda\big\|\epsilon_{\theta}(\x_t,\ h_{\omega}(f_{\phi}(\tilde{\x})),t)-\boldsymbol\epsilon_t^{\text{gt}}\big\|_2^2 + c,
\end{equation*}
where $\lambda>0$, and $c$ is a constant.
\end{rthm2}

\begin{proof}
We consider a single anchor sample and drop the sample index for brevity.

Recall that the anchor is $\hat{\boldsymbol\epsilon} =\epsilon_{\theta}(x_t,\,h_{\omega}(f_{\phi}(\tilde{\x})),\,t)$, the two positives are $\hat{\boldsymbol\epsilon}_{+}$ and $\boldsymbol\epsilon_{\text{gt}}$, and $N=\{\hat{\boldsymbol\epsilon}^{j}_{-}\}_{j}$ denotes the set of negatives in the mini-batch. Let $P=\{\hat{\boldsymbol\epsilon}_{+},\boldsymbol\epsilon_{\text{gt}}\}$, $C=P\cup N$, and $\mathrm{sim}(u,v) = \frac{\langle u,v\rangle}{\|u\|_2\,\|v\|_2}$, $d(u,v)= \exp\bigl(\mathrm{sim}(u,v)/\tau\bigr)$, with $\tau>0$ the temperature. The DCR loss for this anchor is
\begin{equation}
\mathcal{L}_{\mathrm{DCR}} = -\frac{1}{2}\sum_{p\in P} \log\frac{d(\hat{\boldsymbol\epsilon},p)}{\sum_{c\in C} d(\hat{\boldsymbol\epsilon},c)}.
\end{equation}

Denote $u = \mathrm{sim}(\hat{\boldsymbol\epsilon},\boldsymbol\epsilon_{\text{gt}})$, $v = \mathrm{sim}(\hat{\boldsymbol\epsilon},\hat{\boldsymbol\epsilon}_{+})$, and write
\begin{equation}
Z_{\text{pos}}
= \sum_{p\in P}\exp\bigl(\mathrm{sim}(\hat{\boldsymbol\epsilon},p)/\tau\bigr)
= e^{u/\tau} + e^{v/\tau},
\end{equation}
\begin{equation}
Z_{\text{neg}}
= \sum_{j}\exp\bigl(\mathrm{sim}(\hat{\boldsymbol\epsilon},\hat{\boldsymbol\epsilon}^{j}_{-})/\tau\bigr).
\end{equation}

Then we can expand
\begin{equation}
\begin{aligned}
\mathcal{L}_{\mathrm{DCR}}
&= -\frac{1}{2}\sum_{p\in P}
   \left(
      \frac{\mathrm{sim}(\hat{\boldsymbol\epsilon},p)}{\tau}
      - \log\bigl(Z_{\text{pos}}+Z_{\text{neg}}\bigr)
   \right) \\
&= -\frac{1}{2\tau}(u+v)
   + \log\bigl(Z_{\text{pos}}+Z_{\text{neg}}\bigr).
\end{aligned}
\label{eq: Ldcr_uv}
\end{equation}

Using the notation in \cref{eq: Ldcr_uv}, we now show that $\mathcal{L}_{\mathrm{DCR}}$ is (up to scaling and an additive constant) equivalent to the reconstruction loss.

First, by the assumption that negatives are well separated from the anchor, there exist constants $\Delta>0$ and $B\in\mathbb{N}$ such that for all anchors in the mini-batch,
\begin{equation}
\mathrm{sim}(\hat{\boldsymbol\epsilon},\hat{\boldsymbol\epsilon}^{j}_{-}) \le u - \Delta,\quad \forall j, \qquad |N|\le B,
\end{equation}
where $u=\mathrm{sim}(\hat{\boldsymbol\epsilon},\boldsymbol\epsilon_{\text{gt}})$. Hence
\begin{equation}
Z_{\text{neg}} = \sum_{j}\exp\bigl(\mathrm{sim}(\hat{\boldsymbol\epsilon},\hat{\boldsymbol\epsilon}^{j}_{-})/\tau\bigr) \le B \exp\bigl((u-\Delta)/\tau\bigr).
\end{equation}

Since $Z_{\text{pos}}\ge e^{u/\tau}$, we get
\begin{equation}
\frac{Z_{\text{neg}}}{Z_{\text{pos}}} \le B e^{-\Delta/\tau} = \delta,
\end{equation}
where $\delta>0$ is a constant independent of the trainable parameters. Therefore
\begin{equation}
\begin{aligned}
&\log\bigl(Z_{\text{pos}}+Z_{\text{neg}}\bigr)\\
=& \log\Bigl(Z_{\text{pos}}\bigl(1+Z_{\text{neg}}/Z_{\text{pos}}\bigr)\Bigr)\\
=& \log Z_{\text{pos}} + \log\bigl(1+Z_{\text{neg}}/Z_{\text{pos}}\bigr),
\end{aligned}
\end{equation}
and, using $0\le Z_{\text{neg}}/Z_{\text{pos}}\le \delta$,
\begin{equation}
0 \le \log\bigl(1+Z_{\text{neg}}/Z_{\text{pos}}\bigr) \le \log(1+\delta) = C_{\text{neg}}.
\end{equation}

Thus, from \cref{eq: Ldcr_uv},
\begin{equation}
\mathcal{L}_{\mathrm{DCR}} = -\frac{1}{2\tau}(u+v) + \log Z_{\text{pos}} + \Delta_{\text{neg}},
\label{eq: Ldcr_pos}
\end{equation}
where $0 \le \Delta_{\text{neg}} \le C_{\text{neg}}$. It remains to analyze the positive part:
\begin{equation}
\ell_{\text{pos}}(u,v) = -\frac{1}{2\tau}(u+v) + \log\bigl(e^{u/\tau}+e^{v/\tau}\bigr),
\end{equation}
since
\begin{equation}
\mathcal{L}_{\mathrm{DCR}} = \ell_{\text{pos}}(u,v) + \Delta_{\text{neg}}.
\label{eq: Ldcr_split}
\end{equation}

Next, we express $u$ and $v$ in terms of squared Euclidean distances. Define the normalized vectors
\begin{equation}
\hat{e} = \frac{\hat{\boldsymbol\epsilon}}{\|\hat{\boldsymbol\epsilon}\|_2},
\qquad
g = \frac{\boldsymbol\epsilon_{\text{gt}}}{\|\boldsymbol\epsilon_{\text{gt}}\|_2},
\qquad
p = \frac{\hat{\boldsymbol\epsilon}_{+}}{\|\hat{\boldsymbol\epsilon}_{+}\|_2}.
\end{equation}

Then $u = \langle \hat{e},g\rangle$ and $v = \langle \hat{e},p\rangle$. For unit vectors $a,b$ we have
\begin{equation}
\|a-b\|_2^2 = 2(1-\langle a,b\rangle),
\end{equation}
so, defining
\begin{equation}
d_{\text{gt}}^2 = \|\hat{e}-g\|_2^2,
\qquad
d_{+}^2 = \|\hat{e}-p\|_2^2,
\end{equation}
we obtain
\begin{equation}
u = 1 - \frac{1}{2}d_{\text{gt}}^2,
\qquad
v = 1 - \frac{1}{2}d_{+}^2.
\end{equation}

Substituting into $\ell_{\text{pos}}$ gives
\begin{equation}
\ell_{\text{pos}} = \frac{1}{4\tau}\bigl(d_{\text{gt}}^2 + d_{+}^2\bigr) + \log\left(e^{-d_{\text{gt}}^2/(2\tau)} + e^{-d_{+}^2/(2\tau)}\right).
\label{eq: ellpos_d}
\end{equation}

Since $\hat{e},g,p$ are unit vectors, both squared distances are bounded:
\begin{equation}
0 \le d_{\text{gt}}^2 \le 4, \qquad 0 \le d_{+}^2 \le 4.
\label{eq: d_bounds}
\end{equation}

The logarithmic term in \cref{eq: ellpos_d} satisfies a bound. Let
\begin{equation}
a = -\frac{d_{\text{gt}}^2}{2\tau},
\qquad
b = -\frac{d_{+}^2}{2\tau},
\end{equation}
so that
\begin{equation}
\log\left(e^{-d_{\text{gt}}^2/(2\tau)} + e^{-d_{+}^2/(2\tau)}\right) = \log(e^a+e^b).
\end{equation}

For any real $a,b$, we have
\begin{equation}
\max\{a,b\} \le \log(e^a+e^b) \le \max\{a,b\} + \log 2.
\end{equation}

Using $0\le d_{\text{gt}}^2,d_{+}^2\le 4$ and $\tau>0$, we obtain constants $C_{\min},C_{\max}\in\mathbb R$ depending only on $\tau$ such that
\begin{equation}
C_{\min} \le \log\left(e^{-d_{\text{gt}}^2/(2\tau)} + e^{-d_{+}^2/(2\tau)}\right) \le C_{\max}
\label{eq: log_bound}
\end{equation}
for all possible $d_{\text{gt}}^2,d_{+}^2$. Moreover, the term $\frac{1}{4\tau}d_{+}^2$ in \cref{eq: ellpos_d} is also bounded using \cref{eq: d_bounds}, so it can be absorbed into the constants. Consequently, there exist finite constants $\tilde{C}_{\min},\tilde{C}_{\max}$ (depending only on $\tau$) such that
\begin{equation}
\frac{1}{4\tau}d_{\text{gt}}^2 + \tilde{C}_{\min} \le \ell_{\text{pos}} \le \frac{1}{4\tau}d_{\text{gt}}^2 + \tilde{C}_{\max}.
\label{eq: ellpos_bound_dgt}
\end{equation}

We now relate $d_{\text{gt}}^2$ to the true reconstruction error $\|\hat{\boldsymbol\epsilon}-\boldsymbol\epsilon_{\text{gt}}\|_2^2$. For any nonzero vectors $u,v$, the law of cosines gives
\begin{equation}
\|u-v\|_2^2 = \|u\|_2^2 + \|v\|_2^2 - 2\|u\|_2\|v\|_2\,\mathrm{sim}(u,v).
\end{equation}

With $u=\hat{\boldsymbol\epsilon}$, $v=\boldsymbol\epsilon_{\text{gt}}$ and $\mathrm{sim}(u,v)=\langle\hat{e},g\rangle=1-d_{\text{gt}}^2/2$, we obtain
\begin{equation}
\begin{aligned}
\bigl\|\hat{\boldsymbol\epsilon}-\boldsymbol\epsilon_{\text{gt}}\bigr\|_2^2
&= \|\hat{\boldsymbol\epsilon}\|_2^2 + \|\boldsymbol\epsilon_{\text{gt}}\|_2^2 - 2\|\hat{\boldsymbol\epsilon}\|_2\|\boldsymbol\epsilon_{\text{gt}}\|_2 \left(1-\frac{d_{\text{gt}}^2}{2}\right) \\
&= \bigl(\|\hat{\boldsymbol\epsilon}\|_2 - \|\boldsymbol\epsilon_{\text{gt}}\|_2\bigr)^2 + \|\hat{\boldsymbol\epsilon}\|_2\|\boldsymbol\epsilon_{\text{gt}}\|_2\,d_{\text{gt}}^2.
\end{aligned}
\label{eq: MSE_exact}
\end{equation}

By assumption, predicted-noise norms are bounded away from $0$ and $\infty$. Thus, there exist constants $0<\alpha\le\beta<\infty$ such that
\begin{equation}
\alpha \le \|\hat{\boldsymbol\epsilon}\|_2,\ \|\boldsymbol\epsilon_{\text{gt}}\|_2 \le \beta
\quad\text{for all anchors.}
\end{equation}

Using \cref{eq: MSE_exact} and the fact that $\bigl(\|\hat{\boldsymbol\epsilon}\|_2 - \|\boldsymbol\epsilon_{\text{gt}}\|_2\bigr)^2\ge 0$, we have
\begin{equation}
\bigl\|\hat{\boldsymbol\epsilon}-\boldsymbol\epsilon_{\text{gt}}\bigr\|_2^2 \ge \|\hat{\boldsymbol\epsilon}\|_2\|\boldsymbol\epsilon_{\text{gt}}\|_2\,d_{\text{gt}}^2 \ge \alpha^2 d_{\text{gt}}^2.
\end{equation}

On the other hand, using $\|\hat{\boldsymbol\epsilon}\|_2^2 + \|\boldsymbol\epsilon_{\text{gt}}\|_2^2 \le 2\beta^2$ and $\|\hat{\boldsymbol\epsilon}\|_2\|\boldsymbol\epsilon_{\text{gt}}\|_2 \le \beta^2$, we obtain
\begin{equation}
\bigl\|\hat{\boldsymbol\epsilon}-\boldsymbol\epsilon_{\text{gt}}\bigr\|_2^2 \le 2\beta^2 + \beta^2 d_{\text{gt}}^2.
\end{equation}

Rearranging these inequalities yields
\begin{equation}
\frac{1}{\beta^2} \left(\bigl\|\hat{\boldsymbol\epsilon}-\boldsymbol\epsilon_{\text{gt}}\bigr\|_2^2-2\beta^2\right) \le d_{\text{gt}}^2 \le \frac{1}{\alpha^2} \bigl\|\hat{\boldsymbol\epsilon}-\boldsymbol\epsilon_{\text{gt}}\bigr\|_2^2.
\label{eq: dgt_MSE}
\end{equation}

Combining \cref{eq: ellpos_bound_dgt} and \cref{eq: dgt_MSE}, we get affine
bounds on $\ell_{\text{pos}}$ in terms of the reconstruction error.
For the lower bound,
\begin{equation}
\begin{aligned}
\ell_{\text{pos}}
&\ge \frac{1}{4\tau}d_{\text{gt}}^2 + \tilde{C}_{\min} \\
&\ge \frac{1}{4\tau\beta^2} \left(\bigl\|\hat{\boldsymbol\epsilon} -\boldsymbol\epsilon_{\text{gt}}\bigr\|_2^2-2\beta^2\right) + \tilde{C}_{\min} \\
&= \frac{1}{4\tau\beta^2} \bigl\|\hat{\boldsymbol\epsilon}-\boldsymbol\epsilon_{\text{gt}}\bigr\|_2^2 + \left(\tilde{C}_{\min} - \frac{1}{2\tau}\right).
\end{aligned}
\end{equation}

For the upper bound,
\begin{equation}
\begin{aligned}
\ell_{\text{pos}} &\le \frac{1}{4\tau}d_{\text{gt}}^2 + \tilde{C}_{\max} \\
&\le \frac{1}{4\tau\alpha^2} \bigl\|\hat{\boldsymbol\epsilon}-\boldsymbol\epsilon_{\text{gt}}\bigr\|_2^2 + \tilde{C}_{\max}.
\end{aligned}
\end{equation}

Define
\begin{equation}
\lambda_{\min} = \frac{1}{4\tau\beta^2},
\qquad
\lambda_{\max} = \frac{1}{4\tau\alpha^2},
\end{equation}
and constants
\begin{equation}
c_{\min} = \tilde{C}_{\min} - \frac{1}{2\tau},
\qquad
c_{\max} = \tilde{C}_{\max}.
\end{equation}

Then
\begin{equation}
\lambda_{\min} \bigl\|\hat{\boldsymbol\epsilon}-\boldsymbol\epsilon_{\text{gt}}\bigr\|_2^2 + c_{\min} \le \ell_{\text{pos}} \le \lambda_{\max} \bigl\|\hat{\boldsymbol\epsilon}-\boldsymbol\epsilon_{\text{gt}}\bigr\|_2^2 + c_{\max}.
\label{eq: ellpos_affine}
\end{equation}

Finally, recalling \cref{eq: Ldcr_split} and the bound $0\le \Delta_{\text{neg}}\le C_{\text{neg}}$, we conclude that
\begin{equation}
\lambda_{\min} \bigl\|\hat{\boldsymbol\epsilon}-\boldsymbol\epsilon_{\text{gt}}\bigr\|_2^2 + c_{\min} \le \mathcal{L}_{\mathrm{DCR}} \le \lambda_{\max}
\bigl\|\hat{\boldsymbol\epsilon}-\boldsymbol\epsilon_{\text{gt}}\bigr\|_2^2 + c_{\max} + C_{\text{neg}}.
\end{equation}

Thus, $\mathcal{L}_{\mathrm{DCR}}$ is sandwiched between two affine functions of the reconstruction loss $\bigl\|\epsilon_{\theta}(x_t,h_{\omega}(f_{\phi}(\tilde{\x})),t) -\boldsymbol\epsilon_t^{\text{gt}}\bigr\|_2^2$ with strictly positive slopes. In other words, minimizing $\mathcal{L}_{\mathrm{DCR}}$ is equivalent (up to a positive scaling factor and an additive constant) to minimizing the standard reconstruction loss. We can summarize this equivalence in the form
\begin{equation}
\mathcal{L}_{\mathrm{DCR}} = \lambda\bigl\|\epsilon_{\theta}(x_t,h_{\omega}(f_{\phi}(\tilde{\x})),t) -\boldsymbol\epsilon_t^{\text{gt}}\bigr\|_2^2 + c,
\end{equation}
for some $\lambda\in[\lambda_{\min},\lambda_{\max}]$ and constant $c$.

This completed the proof.
\end{proof}

\newpage
\section{Additional Experimental Settings}
\label{sec: supp_Additional Experiment Settings}
In this section, we make a supplementation to Sec 5.1.

\subsection{CLIP Backbones}
\label{subsec: supp_CLIP Backbones}
Here, we provide a detailed summary of the $6$ types of CLIP backbones used in our experiments. They fall into $3$ categories:
\begin{itemize}
    \item \textbf{OpenAI CLIP ViT-L@224 and OpenAI CLIP ViT-L@336}~\cite{radford2021learning} establish the canonical formulation of contrastive language-image pretraining, defining the feature space that later CLIP variants build upon. Their large-scale but noisy training data gives them strong open-world generalization, while their ability to perceive fine-grained visual details remains limited. The 336-resolution variant uniquely extends CLIP's receptive field without altering the architecture, making it a standard choice for studying high-resolution alignment.
    \item \textbf{MetaCLIP ViT-L@224 and MetaCLIP ViT-H@224}~\cite{xudemystifying} distinguishes itself by reconstructing a high-quality, well-aligned training corpus using a principled filtering pipeline rather than relying on raw web data. This dataset-centric redesign leads to representations that are more stable, less noisy, and better calibrated than those of OpenAI CLIP. The ViT-H model further pushes scaling laws within the CLIP paradigm.
    \item \textbf{SigLIP ViT-SO@224 and SigLIP ViT-SO@384}~\cite{zhai2023sigmoid} departs from the classic softmax contrastive loss by introducing a sigmoid-based objective, fundamentally altering how positive and negative pairs influence training. This probabilistic formulation enables more fine-grained pairwise alignment and reduces overconfidence artifacts commonly observed in CLIP-like models. Its higher-resolution variant leverages the smoother objective to maintain stable training.
\end{itemize}

\subsection{Competitors}
\label{subsec: supp_Competitors}
Here we give a more detailed summary of the competitors mentioned in the experiments.

\begin{itemize}
    \item \textbf{Original CLIP}~\cite{radford2021learning,xudemystifying,zhai2023sigmoid} is the baseline vision encoder pretrained on large-scale image–text pairs. It provides generalization across diverse recognition and retrieval tasks. However, it lacks mechanisms for reconstructive feedback, which limits its ability to understand fine-grained visuals.
    \item \textbf{DIVA}~\cite{wangdiffusion} introduces diffusion-based visual feedback to refine CLIP features. It performs reconstruction conditioned on CLIP vision embeddings, enabling the model to recover more detailed visual information.
    \item \textbf{GenHancer}~\cite{ma2025genhancer} systematically investigates how generative models can enhance CLIP by refining conditioning design, denoising strategies, and generation paradigms. It shows that using global conditions and lightweight denoisers leads to more stable reconstruction-based representation learning. In addition, it extends the reconstruction process to discrete latent spaces.
    \item \textbf{un$^2$CLIP}~\cite{li20252} builds on the unCLIP framework by inverting the generative process so that the visual encoder can better capture fine-grained image details while remaining aligned with CLIP's original embedding space.
\end{itemize}

\subsection{Evaluation Protocol for P-Ability}
\label{subsec: supp_Evaluation Protocol for P-Ability}
\noindent \textbf{Datasets.} Following \cite{wangdiffusion,ma2025genhancer,li20252}, we use MMVP-VLM~\cite{tong2024eyes} to evaluate the \textbf{\textcolor{PerColor}{P-Ability}}. It is a benchmark designed to evaluate fine-grained visual perception by testing VLMs on a wide range of visual patterns. It contains human-designed symbols, shapes, transformations, and composites that isolate specific perceptual abilities such as symmetry, color matching, and geometric relations. The dataset includes tens of thousands of diverse pattern–label pairs, providing a controlled environment for assessing detailed visual understanding.

\vskip 0.5 ex
\noindent \textbf{Evaluation metrics.} Following \cite{tong2024eyes}, we report Accuracy (ACC) as the evaluation metric. ACC measures the proportion of samples for which the model correctly recognizes the underlying visual pattern, serving as a direct indicator of fine-grained visual perception.

\subsection{Evaluation Protocol for D-Ability}
\label{subsec: supp_Evaluation Protocol for D-Ability}
\noindent \textbf{Datasets.} Following \cite{radford2021learning}, we perform zero-shot clustering on $6$ standard datasets~\cite{lecun2002gradient,krizhevsky2009learning,helber2019eurosat,fei2004learning,cimpoi2014describing,deng2009imagenet} to evaluate the \textbf{\textcolor{DisColor}{D-Ability}}.
\begin{itemize}
    \item \textbf{MNIST}~\cite{lecun2002gradient} is a handwritten digit recognition dataset containing grayscale images of digits $0$-$9$. Each image is centered and normalized to $28 \times 28$ pixels, making it a standard benchmark for evaluating basic visual representations. It provides $60000$ training and $10000$ test samples.
    \item \textbf{CIFAR-10}~\cite{krizhevsky2009learning} is a natural image dataset designed for general object classification across $10$ categories. The images are low-resolution $32 \times 32$ color images that introduce significant appearance variation despite their small size. The dataset includes $50000$ training and $10000$ test images.
    \item \textbf{Eurosat}~\cite{helber2019eurosat} is a satellite image dataset for land-use and land-cover classification, containing $10$ classes such as agricultural areas, forests, and urban regions. Derived from Sentinel-2 satellite imagery, it includes $21600$ training and $5400$ test images, offering rich spatial and spectral diversity.
    \item \textbf{Caltech-101}~\cite{fei2004learning} designed for general object classification, includes $101$ object categories and a background class, featuring around $7650$ training and $3300$ test images. The images, collected at Caltech, present significant variation in scale, orientation, and lighting conditions.
    \item \textbf{Describable Textures Dataset (DTD)}~\cite{cimpoi2014describing} focuses on texture classification, featuring $47$ texture categories described using human-interpretable attributes. It offers $3760$ training and $1880$ test images, providing a unique challenge in recognizing visually distinctive patterns from natural and artificial sources.
    \item \textbf{ImageNet-1K}~\cite{deng2009imagenet} is a large-scale benchmark for visual recognition, covering $1000$ object categories spanning animals, scenes, and man-made objects. The images exhibit substantial diversity in viewpoint, background, resolution, and object appearance, making it a core testbed for evaluating high-capacity visual models. It contains about $1.28$ million training images and $50$ thousand validation images.
\end{itemize}

\vskip 0.5 ex
\noindent \textbf{Evaluation metrics.} Following \cite{li2024image}, we adopt $3$ widely-used metrics to evaluate the clustering performance.
\begin{itemize}
    \item \textbf{Normalized Mutual Information (NMI)} measures the mutual dependence between predicted clusters and ground-truth labels, normalized to $[0,1]$. Higher values indicate better alignment between the clustering structure and the true class partition.
    \item \textbf{Accuracy (ACC)} evaluates the best-matched assignment between predicted clusters and ground-truth classes. It computes the fraction of correctly assigned samples after optimal label permutation using the Hungarian algorithm.
    \item \textbf{Adjusted Rand Index (ARI)} quantifies the similarity between two partitions by counting pairwise agreements, adjusted for random chance. An ARI of $0$ corresponds to random clustering, while higher values indicate more consistent partitioning.
\end{itemize}

\subsection{Evaluation Protocol for MLLMs}
\label{subsec: supp_Evaluation Protocol for MLLMs}
Following \cite{tong2024eyes,ma2025genhancer,li20252}, we evaluate MLLMs using two complementary groups of benchmarks: \textbf{Vision-Centric Benchmarks}, which focus on fine-grained visual perception and scene understanding, and \textbf{Conventional MLLM Benchmarks}, which measure robustness, hallucination resistance, and multimodal reasoning. Together, they provide a comprehensive view of an MLLM's visual competence, reliability, and general multimodal capability.

\vskip 0.5 ex
\noindent \textbf{Vision-Centric Benchmarks.} We evaluate on $4$ datasets:
\begin{itemize}
    \item \textbf{MMVP-MLLM}~\cite{tong2024eyes} evaluates the model's fine-grained visual perception by testing its ability to identify structured visual patterns. We report classification accuracy (ACC).
    \item \textbf{NaturalBench}~\cite{li2024naturalbench} examines multimodal reasoning across natural images using four metrics: overall accuracy (Acc), question-type accuracy (Q-Acc), instance-level accuracy (I-Acc), and group-level accuracy (G-Acc).
    \item \textbf{CV-Bench 2D}~\cite{tong2024cambrian} measures perception quality on 2D dense prediction tasks. We report the accuracy (ACC) of ADE20K~\cite{zhou2017scene} and COCO~\cite{caesar2018coco}.
    \item \textbf{CV-Bench 3D}~\cite{tong2024cambrian} assesses 3D spatial understanding through structured 3D reasoning queries. We report overall accuracy (ACC).
\end{itemize}

\vskip 0.5 ex
\noindent \textbf{Conventional MLLM Benchmarks.} We evaluate on $3$ datasets:
\begin{itemize}
    \item \textbf{POPE}~\cite{li2023evaluating} evaluates object hallucination robustness under three settings, including random (rand), popularity-based (pop), and adversarial (adv). Higher scores indicate stronger resistance to hallucination.
    \item \textbf{SciQA-IMG}~\cite{lu2022learn} assesses scientific visual question answering involving diagrams, plots, and structured visual cues. We report accuracy (ACC).
    \item \textbf{Hallusion}~\cite{guan2024hallusionbench} measures multimodal robustness by testing whether the model avoids visually induced false inferences. We report the average accuracy across all query types.
\end{itemize}

\section{Additional Experimental Results}
\label{sec: supp_Additional Experiment Results}

\subsection{Expanded Version of Quantitative Results}
\label{subsec: supp_Expanded Version of Quantitative Results}
Here, we present an expanded version of the quantitative results.

\cref{tab: zero-shot classification and retrieval} evaluates the improved CLIP models on the two classical tasks, zero-shot image classification and zero-shot text and image retrieval. The competitor focuses only on fine-grained reconstruction, which neglects discriminative ability and leads to performance drops. In contrast, our method preserves and further improves discriminative ability, demonstrating its effectiveness.

\begin{table}[htbp]
\setlength\tabcolsep{2pt}
\centering
\renewcommand{\arraystretch}{1}
\caption{Performance on zero-shot classification and retrieval.}
\vspace{-10pt}
\resizebox{.98\linewidth}{!}{
\begin{tabular}{@{}lcccccccccc@{}}
\toprule
\multirow{2}{*}{Method} & \multicolumn{6}{c}{Classification} & \multicolumn{2}{c}{Retrieval-Image@5} & \multicolumn{2}{c}{Retrieval-Text@5} \\ \cmidrule(l){2-7} \cmidrule(l){8-9} \cmidrule(l){10-11} 
 & MNIST & C10 & Eur & C101 & DTD & IN-1K & Flickr30k & COCO & Flickr30k & COCO \\ \midrule
Original & 76.4 & \textbf{95.6} & 60.1 & 86.6 & \textbf{55.4} & 75.5 & 87.2 & \textbf{61.1} & \textbf{97.4} & 79.2 \\
Genhancer & 69.7 & 73.7 & 58.5 & 71.5 & 48.4 & 73.8 & 81.6 & 51.1 & 87.3 & 61.4 \\
\rowcolor[rgb]{ .851,  .882,  .957}Ours & \textbf{76.5} & \textbf{95.6} & \textbf{60.3} & \textbf{86.7} & \textbf{55.4} & \textbf{75.6} & \textbf{87.3} & \textbf{61.1} & 97.3 & \textbf{79.3} \\ \bottomrule
\end{tabular}
}
\label{tab: zero-shot classification and retrieval}
\end{table}

\cref{tab: different ratios of local tokens} reports the results under different ratios of local tokens (\texttt{[CLS]} + $n\%$ local tokens). We find that using too many local tokens leads to performance degradation. This may be because excessive local tokens provide overly strong local cues, making the reconstruction task too easy and thus weakening the supervision signal. This phenomenon is consistent with the findings in GenHancer~\cite{ma2025genhancer} and further supports the common behavior of reconstruction-based enhancement methods.

\begin{table}[htbp]
\centering
\renewcommand\arraystretch{0.8}
\caption{Results under different ratios of local tokens (\texttt{[CLS]} + $n\%$ local tokens).}
\resizebox{0.8\linewidth}{!}{
\begin{tabular}{ccccc}
\toprule
\multirow{2}[4]{*}{\textbf{Ratio}} & \textbf{\textcolor{PerColor}{MMVP-VLM}} & \multicolumn{3}{c}{\textbf{\textcolor{DisColor}{Clustering}}} \\
\cmidrule{2-5}
& ACC & NMI & ACC & ARI \\
\midrule
0\%  & \textbf{33.30} & \textbf{0.76} & \textbf{0.67} & \textbf{0.54} \\
10\% & 31.85  & \textbf{0.76}  & 0.63  & 0.53  \\
50\% & 23.70  & 0.73  & 0.60  & 0.46  \\
80\% & 21.48  & 0.69  & 0.57  & 0.44  \\
100\% & 20.74  & 0.65  & 0.55  & 0.42  \\
\bottomrule
\end{tabular}}
\vspace{-10pt}
\label{tab: different ratios of local tokens}
\end{table}

\subsection{Expanded Version of Qualitative Results}
\label{subsec: supp_Expanded Version of Qualitative Results}
Here, we present an expanded version of the qualitative results.

\cref{fig: expanded_mmvp_vlm} presents some qualitative examples from the MMVP-VLM benchmark. The results show that our method enhances fine-grained visual perception, leading to improved \textbf{\textcolor{PerColor}{P-Ability}}.

\cref{fig: expanded_mllm} shows several qualitative examples on the vision-centric benchmarks from MLLMs. The results demonstrate that our enhanced CLIP can be seamlessly integrated into MLLMs to improve their visual capabilities.

\begin{figure*}[htbp]
  \centering
   \includegraphics[width=0.95\linewidth]{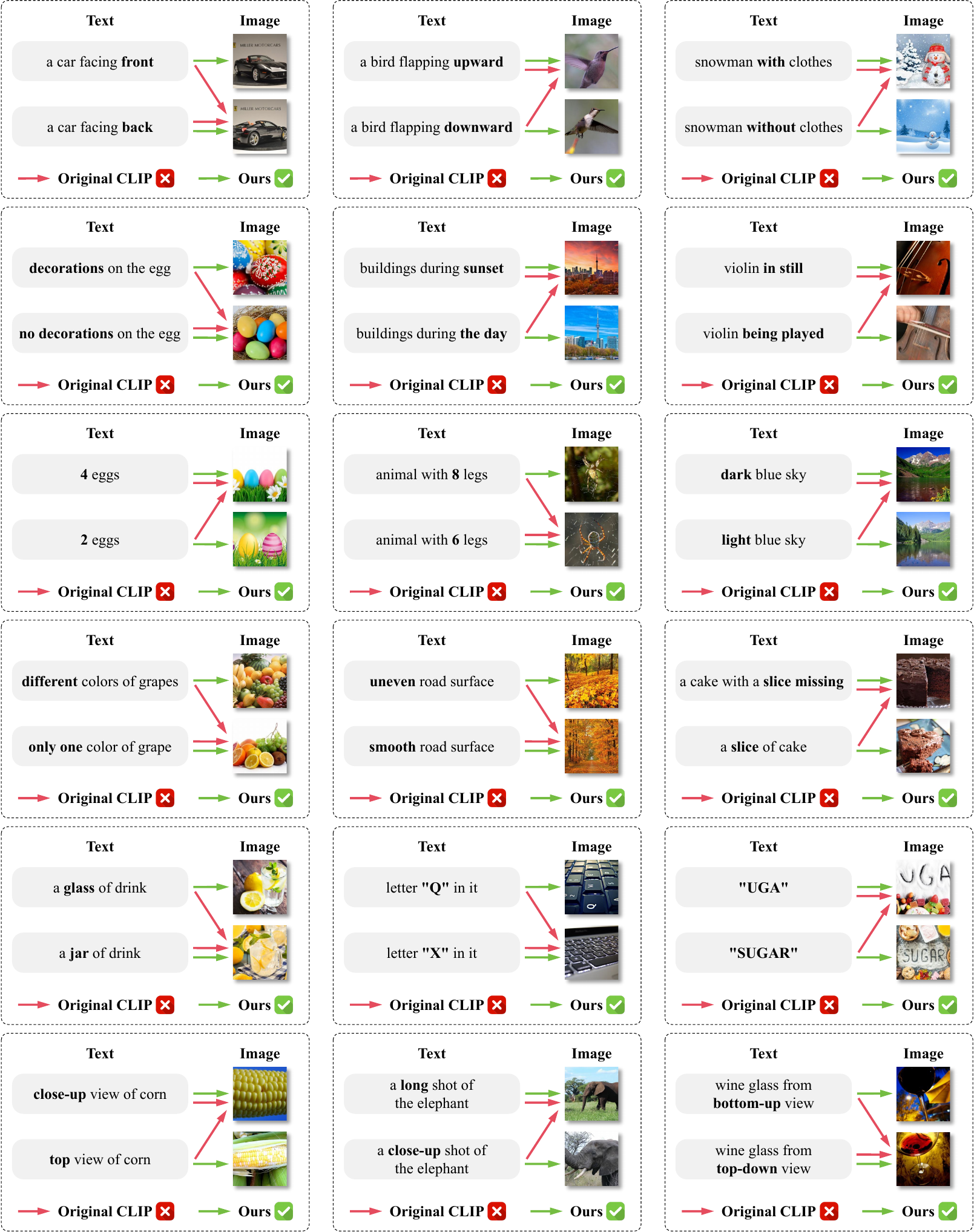}
   \caption{Expanded version of qualitative results on the MMVP-VLM benchmark. The predictions from the original CLIP and our improved version are indicated by red and green arrows, respectively. The improved CLIP effectively addresses the original model's limitations in capturing fine-grained visual details.}
   \label{fig: expanded_mmvp_vlm}
\end{figure*}

\begin{figure*}
  \centering
  \begin{subfigure}{0.9\linewidth}
    \includegraphics[width=\linewidth]{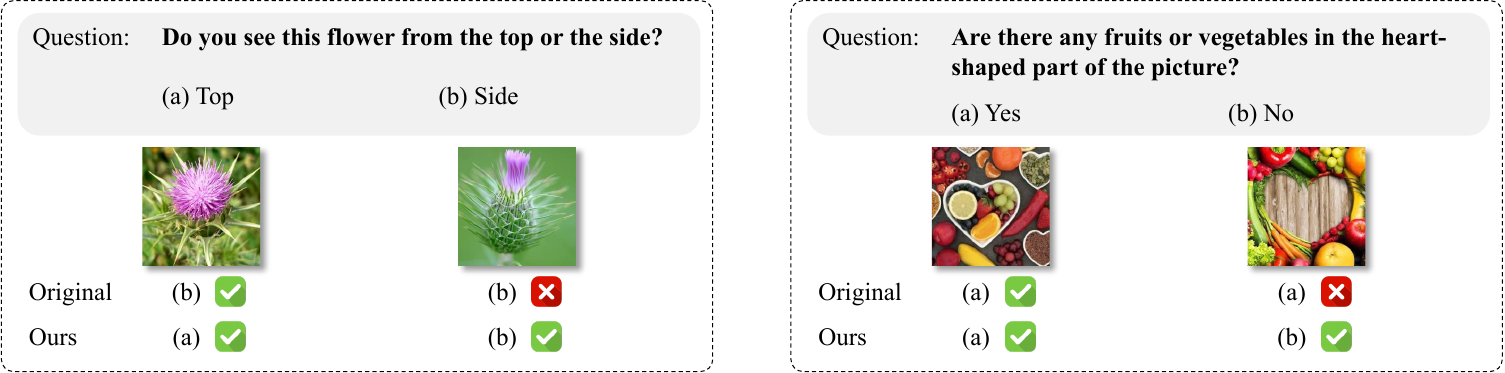}
    \caption{MMVP-MLLM.}
    \label{fig: expanded_mmvp_mllm}
  \end{subfigure}
  \hfill
  \begin{subfigure}{0.9\linewidth}
    \includegraphics[width=\linewidth]{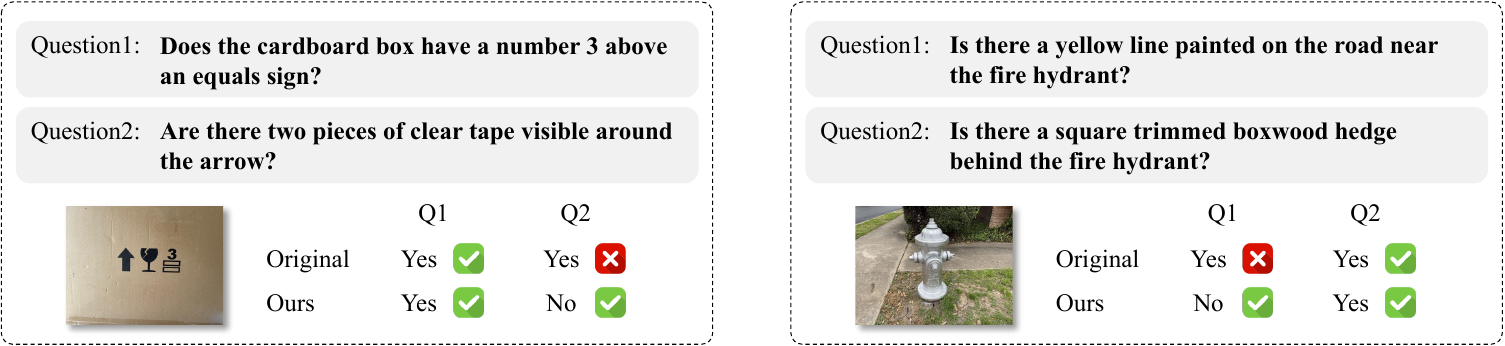}
    \caption{NaturalBench.}
    \label{fig: expanded_naturalbench}
  \end{subfigure}
  \caption{Expanded version of qualitative results on the MLLM benchmarks.}
  \label{fig: expanded_mllm}
\end{figure*}

\subsection{Computational Costs}
\label{subsec: supp_Computational Costs}
\cref{tab: Training costs} reports the computational costs of our method. Across different CLIP backbones, the training cost of DCR remains stable and mainly scales with the backbone size and input resolution. Models with similar parameter counts, such as OpenAI CLIP ViT-L@224, OpenAI CLIP ViT-L@336, and MetaCLIP ViT-L@224, exhibit almost identical training times, indicating that DCR introduces minimal overhead beyond the backbone’s forward passes. Larger or higher-resolution models, including MetaCLIP ViT-H@224 and SigLIP ViT-SO@384, incur predictable increases in training time and GPU memory usage, yet the overall computation remains well within a practical range for NVIDIA-A100 80GB GPU. These results show that DCR is computationally lightweight and can be seamlessly applied to a wide range of CLIP architectures without requiring specialized optimization.

\begin{table}[htbp]
  \centering
  \renewcommand\arraystretch{1}
  \caption{Training costs of our DCR.}
  \vspace{-10pt}
  \resizebox{0.9\linewidth}{!}{
    \begin{tabular}{cccc}
    \toprule
    \textbf{CLIP Backbone} &
    \textbf{\begin{tabular}[c]{@{}c@{}}Training\\ \#Params\end{tabular}} &
    \textbf{\begin{tabular}[c]{@{}c@{}}Training\\ Time\end{tabular}} &
    \textbf{\begin{tabular}[c]{@{}c@{}}Training\\ GPU Memory\end{tabular}} \\
    \midrule
    OpenAI CLIP ViT-L@224 & 350.5M & 7.7h  & 53544M \\
    OpenAI CLIP ViT-L@336 & 350.5M & 7.6h  & 60472M \\
    MetaCLIP ViT-L@224    & 350.5M & 7.7h  & 60472M \\
    MetaCLIP ViT-H@224    & 360.0M & 9.2h  & 76430M \\
    SigLIP ViT-SO@224     & 358.5M & 8.0h  & 66798M \\
    SigLIP ViT-SO@384     & 358.5M & 9.7h  & 78792M \\
    \bottomrule
    \end{tabular}
  }
\label{tab: Training costs}
\end{table}

\section{Future Works}
\label{sec: Future Works}
In the future, we plan to extend our diffusion contrastive reconstruction beyond diffusion models to VAR-based generators, enabling a broader family of generative priors to contribute fine-grained visual supervision. In parallel, we aim to investigate how our framework can be applied to strengthen visual encoders from diverse architectures, thereby providing a unified and adaptable enhancement strategy for a wide range of vision systems. Moreover, we believe that a systematic study of reconstruction-based methods is essential for uncovering the underlying principles of representation enhancement and for establishing rigorous theoretical guarantees.


\end{document}